\documentclass[runningheads]{llncs}

\usepackage{eccv}

\usepackage{eccvabbrv}

\usepackage{graphicx}
\usepackage{booktabs}
\usepackage{amsmath}
\usepackage{amssymb}
\usepackage{float}
\usepackage[accsupp]{axessibility}

\usepackage{hyperref}
\usepackage{orcidlink}

\begin{document}

\title{Geometry-Aware Hyperbolic Residual-Quantized Variational Autoencoders}

\titlerunning{Geometry-Aware Hyperbolic Residual Quantization}
\author{Alessio Colombo \and Melika Ayoughi}
\authorrunning{A.~Colombo and M.~Ayoughi}
\institute{Universiteit van Amsterdam, Amsterdam, Netherlands\\
\email{alessio.colombo@student.uva.nl}, \email{m.ayoughi@uva.nl}}

\maketitle

\begin{abstract}
Residual Vector Quantization turns continuous representations into discrete, multi-level token sequences. Yet most methods operate in Euclidean space, despite the coarse-to-fine structure of the resulting codes and the latent hierarchies present in many data domains. Hyperbolic geometry offers a natural alternative for hierarchical representations, but naive hyperbolic extensions introduce geometric inconsistencies: non-associative hyperbolic addition prevents consistent residual aggregation, while standard straight-through gradient estimation ignores the geometry of the latent space.
We propose a geometry-aware hyperbolic residual quantization that addresses these issues in both the forward and backward passes. In the forward pass, \textit{Hyperbolic Residual Aggregation} restores the telescoping behavior of residual quantization on the Poincaré ball. In the backward pass, a \textit{discounted Hyperbolic Straight-Through Estimator} routes the reconstruction gradient through the quantizer as a single geometric block, avoiding unstable recursive gradient transport across residual stages. Evaluations on hierarchical prediction, recommendation, image tokenization, and neural audio coding tasks show that our method improves the stability and structural organization of hyperbolic residual codes over naive hyperbolic baselines. At the same time, we observe a clear structure–compression trade-off: Euclidean residual quantization remains preferable for pure compression, while geometry-aware hyperbolic quantization is most useful for hierarchically organized discrete latent spaces.
\keywords{Hyperbolic Learning \and Residual Vector Quantization}
\end{abstract}

\section{Introduction}
\label{sec:intro}
In recent years, a growing line of work in modern generative modeling is increasingly shifting toward learning discrete representations \cite{originalvq, vqvae, finitescalarquantizationfsq, gumbelvq, vqnew}. By converting continuous signals into token sequences, vector-quantized autoencoders make it possible to apply powerful sequence models, such as autoregressive transformers \cite{lee2022rqvae, tiger} and diffusion-based architectures \cite{resgen}, to images \cite{vqvae, lee2022rqvae}, audio \cite{soundstream, encodec, audiolm}, text \cite{collapse_2}, and multi-modal data \cite{musicgen, talkplay}.
Residual vector quantization extends this idea by representing an input through multiple quantization stages: early codebooks capture coarse information, while later codebooks refine the remaining error \cite{soundstream, lee2022rqvae}. 

Most residual quantization methods, however, operate in Euclidean latent spaces \cite{soundstream, encodec, lee2022rqvae}, which are intrinsically flat, with volume growing only polynomially in the radius \cite{sala2018representation}. This is not always ideal: many data domains contain latent hierarchical structure \cite{nickel2017poincare, sala2018representation} that the multi-stage codes of residual quantization could organize given a suitable geometry. Hyperbolic geometry is a natural candidate, as its negative curvature and exponential volume growth represent hierarchies with low distortion \cite{cannon1997hyperbolic, sala2018representation, sarkar2011}. Residual quantization in hyperbolic space could thus provide a better inductive bias when the goal is not only compression but the discovery of hierarchically organized representations \cite{hrqvae, mathieu2019poincare}.

Recent works explore hyperbolic vector quantization \cite{hypervq, hvqvae, bu2025ggball} and hyperbolic residual quantization \cite{hrqvae, hyprqvae_recsys}. Yet a naive transfer to hyperbolic space introduces two geometric inconsistencies. First, during the forward pass, the residual cascade is no longer algebraically accurate. In Euclidean residual quantization, subtracting selected codewords from the residual and summing them back into the reconstruction are inverse operations. In hyperbolic space, the corresponding operation, Möbius addition, is non-associative and non-commutative \cite{ungar2009gyrovector, ganea2018hyperbolic}; so naive aggregation fails to recompose the encoder input, and the residual cascade drifts as depth increases. Second, during the backward pass, the standard straight-through estimator passes gradients through the residual quantizer as if the latent space were Euclidean. This ignores the geometry of the manifold and leads to unstable gradient flow across residual stages.

We address both issues with a geometry-aware hyperbolic residual-quantized variational autoencoder (GHRQ-VAE) that repairs the forward and backward passes, stabilizing training at depth and enabling the representation of latent hierarchical structures.

Our contributions are threefold. (i) We identify and formalize the forward and backward geometric inconsistencies that arise when residual quantization is naively lifted to hyperbolic space. (ii) We introduce a geometry-aware hyperbolic residual quantizer that restores consistent residual aggregation and provides stable block-level gradient routing on the Poincaré ball. (iii) We evaluate the method across hierarchical prediction, recommendation, image tokenization, and neural audio coding. To the best of our knowledge, this is the first application of hyperbolic RQ-VAEs to image and audio tasks. Our results reveal that while Euclidean methods remain preferable for pure signal compression and raw reconstruction fidelity, geometry-aware hyperbolic residual quantization provides more stable training and yields more structurally organized hyperbolic residual codes, especially compared with naive hyperbolic baselines.


\section{Related Work}
\label{sec:related}
\paragraph{Vector and Residual Quantization.}
Vector quantization has a long history in signal processing for lossy compression \cite{originalvq}, and was revived in deep learning by the VQ-VAE \cite{vqvae}, which introduced a discrete bottleneck into the autoencoder framework (we review its mechanics in \S\ref{sec:bg_vq}). A persistent obstacle is codebook collapse \cite{collapse_1, collapse_2}, where only a few codewords are ever selected; remedies range from codebook resets and EMA updates to alternative bottlenecks \cite{hqvae, gumbelvq, finitescalarquantizationfsq}. Residual Vector Quantization (RVQ) extends single-stage VQ by quantizing in multiple successive stages \cite{soundstream}, constructing a virtual codebook of exponential capacity without growing the memory footprint or sequence length. RQ-VAE has since been applied across image generation \cite{lee2022rqvae, resgen, progic, evotok}, audio codecs \cite{soundstream, encodec}, audio generation \cite{audiolm, musicgen}, and generative recommendation \cite{tiger, hyprqvae_recsys}, all sharing an inherent hierarchy: truncating the code tuple at any depth yields a coarser but coherent approximation. Yet all existing RVQ methods operate in Euclidean space, whose flat geometry and polynomial volume growth are mismatched with the hierarchical structure that residual quantization induces.

\paragraph{Hyperbolic Representation Learning.}
In hyperbolic space, the volume of a geodesic ball grows exponentially with radius, in stark contrast to the polynomial growth of Euclidean space \cite{cannon1997hyperbolic}. Sala et al.\ \cite{sala2018representation} showed that any weighted tree with $n$ nodes embeds into two-dimensional hyperbolic space with arbitrarily low distortion, whereas Euclidean space requires $\mathcal{O}(n)$ dimensions, a gap that continues to guide the design of hierarchy embeddings \cite{ayoughi2025designing}. Hyperbolic representation learning was pioneered by Nickel and Kiela \cite{nickel2017poincare, nickel2018lorentz}, and Ganea et al.\ \cite{ganea2018hyperbolic} formalized neural network operations on the ball via Möbius gyrovector algebra \cite{ungar2009gyrovector}; subsequent work moved computation fully onto the manifold \cite{fullyhyperbolic, hcnn, poincareresnet}. Hyperbolic networks have since been applied across word embeddings \cite{tifrea2019poincare, hypertext, dhingra2018embedding}, graph learning \cite{chami2019hyperbolic, liu2019hgnn, hgnnsurvey, li2024hyperbolic}, computer vision \cite{khrulkov2020hyperbolic, atigh2022hyperbolic, ermolov2022hvt, mettes2024hyperbolic, liu2020hyperbolic, peng2021hyperbolic}, continual and incremental learning \cite{ayoughi2025continual, hindel2024taxonomy, sur2025hyperbolic}, and vision-language and language models \cite{pal2025compositional, helm, he2025survey, ibrahimi2024intriguing}. Poincaré VAE \cite{mathieu2019poincare} and related hyperbolic generative models \cite{nagano2019wrapped, skopek2020mixed, apovae, hypdae} showed that hierarchical structure emerges in hyperbolic latents without supervision, while supervised approaches embed known label hierarchies through entailment cones \cite{ganea2018cones, dhall2020hierarchical, chen2020hyperbolic} and ideal boundary prototypes \cite{atigh2021busemann, berg2025multiprototype}, at any level of granularity \cite{atigh2026hyperbolic}. Several works combine hyperbolic geometry with vector quantization \cite{hypervq, hvqvae} and residual vector quantization \cite{hrqvae,hyprqvae_recsys}. Because these methods retain Euclidean gradient transport and a non-associative aggregation, we make both geometrically aware.

\section{Preliminaries}
\label{chap:background}
\paragraph{Vector-Quantized Variational Autoencoders.}
\label{sec:bg_vq}
The VQ-VAE \cite{vqvae} learns a discrete latent representation of continuous data. Given an input signal $x$, an encoder network $E$ maps it to a continuous latent representation $z_e = E(x) \in \mathbb{R}^d$. A trainable codebook $C = \{c_1, \dots, c_K\} \subset \mathbb{R}^d$ is then used to discretize this vector via the nearest codeword under the Euclidean distance, $q(z_e) = c_k$ with $k = \operatorname{argmin}_j \| z_e - c_j \|^2_2$. The selected vector is passed to a decoder $G$ that reconstructs $\hat{x} = G(q(z_e))$. Since the nearest-neighbor assignment is non-differentiable, the VQ-VAE uses the Straight-Through Estimator (STE) \cite{straight_through_estimator}: the forward value is computed while the difference $q(z_e) - z_e$ is held under the stop-gradient operator $\operatorname{sg}[\cdot]$,
\begin{equation}
    \label{eq:bg_ste}
    \hat{z}_{\text{STE}} = z_e + \operatorname{sg}\big[q(z_e) - z_e\big].
\end{equation}
This evaluates to $q(z_e)$ in the forward pass, yet because the stop-gradient term is treated as a constant, its Jacobian reduces to $\partial \hat{z}_{\text{STE}} / \partial z_e = I$, so the decoder gradient is passed unaltered to the encoder.

\paragraph{Residual Vector Quantization.}
\label{sec:bg_rvq}
RVQ extends the single-stage paradigm by quantizing the latent representation in $N$ successive stages, each with its own codebook \cite{soundstream}. Let $r_0 = z_e$ denote the encoder output. At stage $i$ the current residual $r_{i-1}$ is quantized to a codeword $q_i$, and the residual for the next stage is obtained by subtracting the selected codeword, $r_i = r_{i-1} - q_i$. The STE of Eq.~\ref{eq:bg_ste} is applied independently at every stage, and the final representation is the sum of the selected codewords, $\hat{z} = \sum_{i=1}^{N} q_i$. This coarse-to-fine decomposition constructs a virtual codebook of effective size $K^N$ and induces a natural hierarchy: early stages capture coarse, global structure while later stages encode progressively finer detail. All stages are trained jointly with the objective
\begin{equation}
    \label{eq:rqvae_loss}
    \mathcal{L} = \underbrace{\lVert x - \hat x\rVert_2^2}_{L_{\text{rec}}}
    \;+\; \sum_{i=1}^{N}\Big(
    \underbrace{d\big(\operatorname{sg}[r_{i-1}],\, q_i\big)^2}_{\text{codebook}}
    \;+\; \beta\,\underbrace{d\big(r_{i-1},\, \operatorname{sg}[q_i]\big)^2}_{\text{commitment}}
    \Big),
\end{equation}
where the reconstruction term $L_{\text{rec}}$ is decoded from the aggregate code as $\hat x = G(\hat z)$, $\beta$ weights the commitment term, and $d(\cdot,\cdot)$ is a distance on the latent space. The Euclidean distance recovers the standard RQ-VAE loss; replacing it with the geodesic distance and the hyperbolic aggregate of \S\ref{sec:residagg} gives the hyperbolic counterpart we adopt.

\paragraph{Hyperbolic Geometry.}
\label{sec:bg_hyperbolic}
Hyperbolic space is a Riemannian manifold of constant negative curvature whose geodesic-ball volume grows exponentially with radius \cite{cannon1997hyperbolic}.
We adopt the Poincar\'e ball, which is well suited to gradient-based learning because its operations admit closed forms \cite{ganea2018hyperbolic}. For a curvature parameter $c>0$, the Poincar\'e ball $\mathbb{D}_c^d = \{ x \in \mathbb{R}^d : c\|x\|^2 < 1 \}$ is equipped with the conformal metric $g_x^c = (\lambda_x^c)^2 g^E$, with conformal factor $\lambda_x^c = 2/(1 - c\|x\|^2)$; \emph{conformal} means that the metric is a positive pointwise rescaling of the Euclidean metric $g^E$, so angles are preserved but lengths are not. The setting $c=0$ recovers Euclidean space. The conformal factor is decisive: $\lambda_0^c=2$ at the origin but diverges, $\lambda_x^c\to\infty$, toward the boundary, which both grants expressive power and makes computations numerically delicate there. In the Poincaré ball, the analogue of addition is the M\"obius addition
\begin{equation}
    \label{eq:bg_mobius_add}
    x \oplus_c y = \frac{\big(1 + 2c\langle x, y\rangle + c\|y\|^2\big)x + \big(1 - c\|x\|^2\big)y}{1 + 2c\langle x, y\rangle + c^2\|x\|^2\|y\|^2},
\end{equation}
with M\"obius subtraction $x \ominus_c y = x \oplus_c (-y)$. $\oplus_c$ is neither commutative nor associative. The failure of commutativity is captured by the \emph{gyration} operator
\begin{equation}
    \label{eq:bg_gyration}
    \operatorname{gyr}[u, v]\,w = \ominus(u \oplus_c v) \oplus_c \big(u \oplus_c (v \oplus_c w)\big),
\end{equation}
an automorphism of the ball that acts as a rotation: $u \oplus_c v = \operatorname{gyr}[u, v](v \oplus_c u)$. The induced geodesic distance is $d_{\mathbb{D}_c}(x, y) = \tfrac{2}{\sqrt{c}}\tanh^{-1}(\sqrt{c}\|(-x) \oplus_c y\|)$.
At every point $x$ the tangent space $T_x\mathbb{D}_c^d$ is a local Euclidean linearization of the manifold; movement between the manifold and a tangent space is mediated by the mutually inverse exponential and logarithmic maps, which at the origin take the radial forms $\exp_0^c(v) = \tanh(\sqrt{c}\|v\|)\tfrac{v}{\sqrt{c}\|v\|}$ and $\log_0^c(y) = \tanh^{-1}(\sqrt{c}\|y\|)\tfrac{y}{\sqrt{c}\|y\|}$. A tangent vector at $x$ cannot be directly compared with one at $y$; parallel transport carries it along the connecting geodesic while preserving its Riemannian norm,
\begin{equation}
    \label{eq:bg_pt}
    P_{x \to y}^c(v) = \frac{\lambda_x^c}{\lambda_y^c}\,\operatorname{gyr}[y, -x]\,v,
\end{equation}
which both rescales the vector by the ratio of conformal factors and rotates it through the gyration. Finally, because the metric rescales the inner product by $(\lambda_x^c)^2$, the Riemannian gradient relates to the Euclidean one by $\nabla^R f(x) = (\lambda_x^c)^{-2}\nabla^E f(x)$. A correct backward pass on the ball therefore converts Euclidean gradients to Riemannian ones, transports them between tangent spaces via Eq.~\ref{eq:bg_pt}, and converts back; as $\lambda_x^c$ diverges near the boundary, these conversions can amplify gradient magnitudes substantially.

\section{Method}
\label{chap:methods}
\label{sec:ghrq}
Geometry-aware Hyperbolic Residual Quantization (GHRQ) lifts residual quantization onto the Poincar\'e ball through two independent repairs that compose into a single quantizer: Hyperbolic Residual Aggregation (HRA), which provides an algebraically correct coarse-to-fine decomposition in the forward pass (\S\ref{sec:residagg}), and a block-level gradient routing, which uses a single discounted Hyperbolic Straight-Through Estimator (d-HSTE) step to send the gradient back to the encoder during the backward pass (\S\ref{sec:blockste}).


\subsection{Hyperbolic Residual Aggregation}
\label{sec:residagg}
\label{sec:newmethod}
Residual quantization rests on a single algebraic guarantee: the rule that \emph{removes} a code from the running residual and the rule that \emph{re-assembles} the codes into the reconstruction must be exact inverses. In Euclidean space this holds for free. Writing $r_0=z_e$ for the encoder point and $\hat z_i$ for the aggregate of the first $i$ codes, the update $r_i=r_{i-1}-q_i$ and the sum $\hat z=\sum_i q_i$ are mutually inverse because addition is commutative and associative; the residual entering each stage therefore equals the part of $z_e$ not yet captured by the earlier codes (the \emph{true residual} $R_i^{\text{true}}=z_e-\hat z_i=r_i$) and the decomposition \emph{telescopes}, recomposing $z_e$ up to the final, unquantized residual, $\hat z+r_{N}=z_e$. Each codebook is thus fitted to the reconstruction error left by its predecessors, which is the entire purpose of the coarse-to-fine cascade.

On the Poincar\'e ball this guarantee is no longer automatic. M\"obius addition is neither commutative nor associative, so the naive lift of the recursion, replacing the subtraction by a \emph{right} M\"obius subtraction and the sum by a left-associated M\"obius addition,
\begin{equation}
    \label{eq:prev_residual}
    r_i = r_{i-1}\oplus_c(-q_i),
    \qquad
    \hat z = \big(\cdots(q_1\oplus_c q_2)\oplus_c\cdots\big)\oplus_c q_{N},
\end{equation}
no longer inverts itself. The tracked residual drifts away from the true residual, compounds with depth, and the codes no longer recompose to $z_e$. 

\paragraph{The HRA convention.} To solve this, we choose the residual and aggregation rules so that they cancel \emph{by construction}, using the one cancellation law the gyrogroup does provide. The left-cancellation law
\begin{equation}
    \label{eq:left_cancel}
    a \oplus_c\big((-a)\oplus_c b\big) = b
\end{equation}
states that adding $a$ on the left exactly undoes subtracting $a$ on the left. Pairing a \emph{left} M\"obius subtraction in the residual update with a \emph{reverse-nested} (right-associated) aggregation,
\begin{equation}
    \label{eq:hrq_residual}
    r_i = (-q_i)\oplus_c r_{i-1},
    \qquad
    \hat z = q_1\oplus_c\big(q_2\oplus_c(\cdots\oplus_c q_{N})\big),
\end{equation}
matches each subtraction to precisely the addition that inverts it. We call this pairing the \emph{Hyperbolic Residual Aggregation} (HRA) convention. 

\paragraph{Exact telescoping.} The HRA convention inverts the cascade stage by stage. The first stage gives $r_1=(-q_1)\oplus_c z_e$, which Eq.~\ref{eq:left_cancel} (taking $a=q_1$, $b=z_e$) inverts as $q_1\oplus_c r_1=z_e$. The same identity holds at every stage, $q_i\oplus_c r_i=r_{i-1}$, so unrolling the recursion recomposes the encoder point without error, $q_1\oplus_c(q_2\oplus_c(\cdots\oplus_c(q_{N}\oplus_c r_{N})))=z_e$. Dropping the final unquantized residual $r_{N}$ leaves the reconstruction $\hat z$ of Eq.~\ref{eq:hrq_residual}, which coincides with $z_e$ up to that tail. This is exactly the Euclidean telescoping property, now recovered on the ball.

\paragraph{The residual mismatch collapses to a pure rotation.} On the ball the true residual is $R_i^{\text{true}} := (-\hat z_i)\oplus_c z_e$, the M\"obius left-difference satisfying $\hat z_i\oplus_c R_i^{\text{true}}=z_e$, which in general differs from the tracked residual $r_i$. Under HRA, however, repeatedly applying the gyration form of left cancellation relates the two by a composition of gyrations,
\begin{equation}
    \label{eq:hrq_gyration}
    R_i^{\text{true}} = \Gamma_i\, r_i,
    \qquad
    \Gamma_i = \prod_{k=1}^{i-1}\operatorname{gyr}\!\big[q_k,\,u_{k+1}\big],
\end{equation}
Each factor is a norm-preserving rotation of the ball about the origin, so $\Gamma_i$ rotates the residual's \emph{direction} while adding \emph{zero} magnitude error, $\|R_i^{\text{true}}\| = \|r_i\|$. The tracked residual thus carries the magnitude of the true reconstruction error at every depth (proof in Appendix~\ref{app:hra_rotation}), unlike the naive convention, whose mismatch is a drift that corrupts this magnitude and compounds with depth.

Figure~\ref{fig:residagg} makes the contrast concrete on the Poincar\'e disk, where HRA aggregation lands on $z_e$ up to the tail $r_{N}$ while the naive order drifts away.

\begin{figure}[t]
    \centering
    \includegraphics[width=0.44\textwidth]{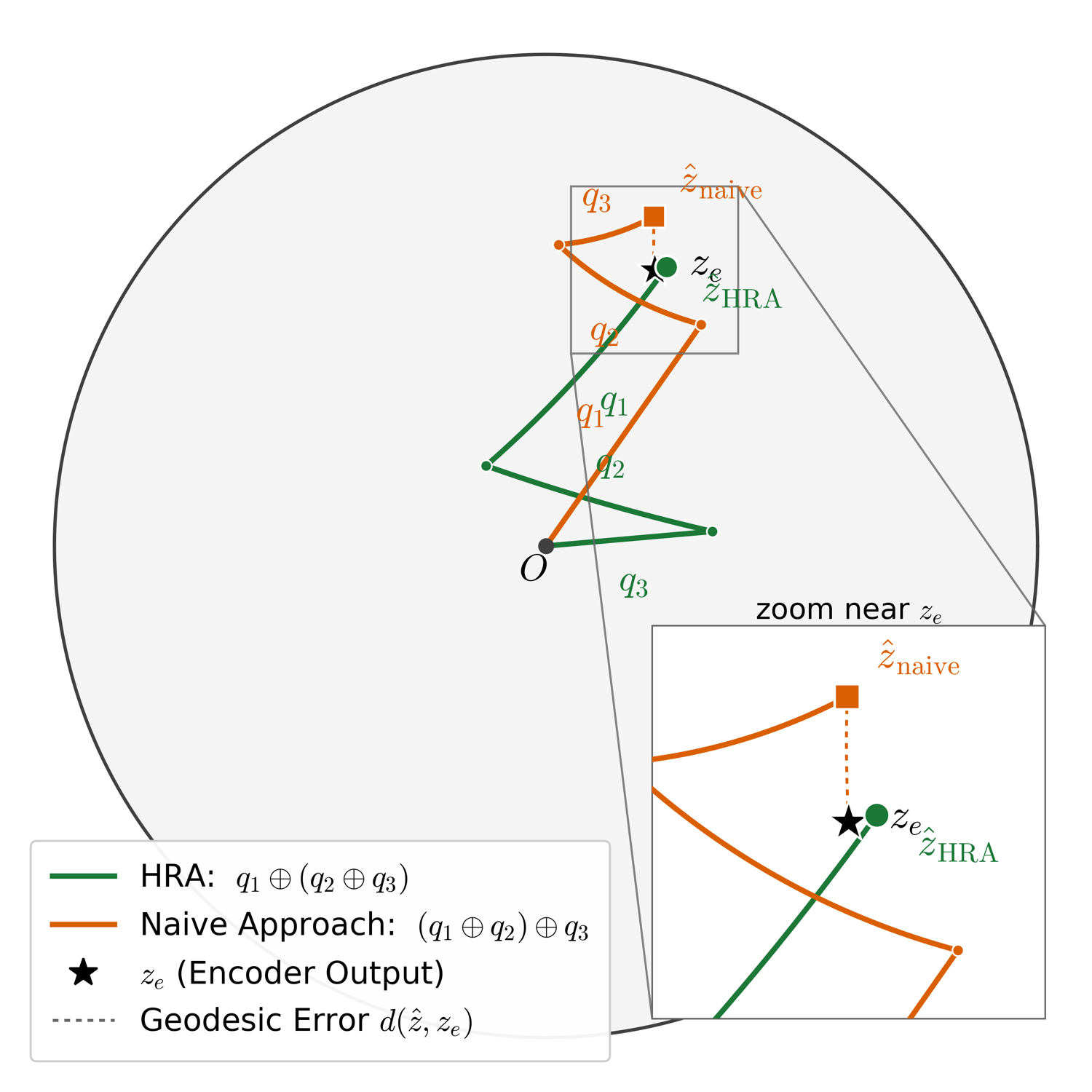}
    \caption{Residual aggregation of the same codewords under the two conventions of \S\ref{sec:residagg}, on the Poincar\'e disk. An encoder point $z_e$ (star) is quantized into coarse-to-fine codes $q_1,q_2,q_3$, and each aggregation is drawn as a chain of M\"obius hops from the origin $O$. The HRA reverse-nested order $q_1\oplus_c(q_2\oplus_c q_3)$ (green circle, Eq.~\ref{eq:hrq_residual}) reaches $\hat z_{\mathrm{HRA}}$, coinciding with $z_e$ up to the final residual. The naive left-associated order $(q_1\oplus_c q_2)\oplus_c q_3$ (orange square, Eq.~\ref{eq:prev_residual}) misplaces the gyration factors and drifts to $\hat z_{\mathrm{naive}}$; the inset zooms on $z_e$. The drift is small for shallow, well-quantized residuals, and compounds with depth and proximity to the ball boundary.}
    \label{fig:residagg}
\end{figure}

\subsection{Block-Level Gradient Routing with a Discounted HSTE}
\label{sec:hste}
\label{sec:blockste}
We repair the backward pass with two mechanisms. The first is the \emph{Discounted Hyperbolic Straight-Through Estimator} (d-HSTE), a single-step surrogate that transports one gradient between two points on the Poincar\'e ball while respecting its geometry. The second is a \emph{block-level gradient routing} strategy that, via stop-gradients, decouples the residual cascade and applies the d-HSTE exactly once. This transports a single, boundary-stable reconstructed gradient from the aggregate reconstruction $\hat z$ directly to the encoder output $z_e$, bypassing all intermediate codes $q_i$ ($i>1$) and residuals $r_i$ ($i>0$). The result is a depth-independent gradient to the encoder that parallels the Euclidean RQ-VAE behavior.

\paragraph{Discounted Hyperbolic Parallel Transport.}
In the forward pass, the d-HSTE acts as the identity mapping: $\mathrm{d\text{-}HSTE}(z_e,q) = q$. It only modifies the backward pass, where it assigns the encoder point a surrogate Jacobian in place of the non-differentiable nearest-neighbor assignment. Given a Euclidean gradient $g_q := \partial L/\partial q$ at a code $q$, exact transport to $z_e$ requires three Riemannian operations: (i) conversion to a Riemannian gradient at $q$ by dividing by $(\lambda^c_q)^2$; (ii) parallel transport along the geodesic from $q$ to $z_e$, scaling by $\lambda^c_q/\lambda^c_{z_e}$ and rotating by $\operatorname{gyr}[z_e,-q]$; and (iii) conversion back to a Euclidean gradient at $z_e$ via multiplication by $(\lambda^c_{z_e})^2$. We skip the third step, since $(\lambda^c_{z_e})^2$ diverges as $z_e$ approaches the boundary. Steps (i)--(ii) alone define a \emph{discounted parallel transport} $\widetilde P^c_{q\to z_e}$, the surrogate derivative the estimator assigns to the encoder point:
\begin{equation}
    \label{eq:hste_backward}
    \frac{\partial L}{\partial z_e}
    = \widetilde P^c_{q\to z_e}\,g_q
    = \frac{1}{\lambda^c_q\,\lambda^c_{z_e}}\,\operatorname{gyr}[z_e,-q]\,g_q .
\end{equation}
Because $q$ closely approximates $z_e$, evaluating the gyration via its standard formulation is prone to catastrophic cancellation; we instead use a numerically stable, exact reformulation of the gyration. Writing $\operatorname{gyr}[z_e,-q]\,v = v + 2\,(a\,z_e - b\,q)/d$ and setting the quantization error $\delta := q - z_e$, both the denominator $d$ and the coefficient gap $a-b$ otherwise subtract two nearly identical $O(1)$ quantities; cancelling these analytically gives the equivalent forms
\begin{align}
    \label{eq:gyr_stable}
    d &= (1-c\|z_e\|^2)^2 - 2c\,(1-c\|z_e\|^2)\,\langle z_e,\delta\rangle + c^2\|z_e\|^2\|\delta\|^2,\notag\\
    a-b &= -c\,(1-c\|z_e\|^2)\,\langle\delta,v\rangle - c^2\langle z_e,v\rangle\,\|\delta\|^2 + 2c^2\langle z_e,\delta\rangle\,\langle\delta,v\rangle ,
\end{align}
expressed through small terms of comparable magnitude (with $a\,z_e-b\,q=\tfrac{a-b}{2}(q+z_e)-\tfrac{a+b}{2}\delta$ and $a+b$ computed directly). This is mathematically identical to the closed form (Appendix~\ref{app:gyration}) but stays finite at the boundary, preserving expressiveness in high-curvature regions.

\paragraph{Block-Level Gradient Routing.}
Stacking d-HSTE steps naively would still be unstable: unlike the Euclidean case, the per-stage residual Jacobians $A_i=\partial r_i/\partial r_{i-1}$ do not vanish on the ball (the two differentials of $\oplus_c$ differ by $c\|q_i-r_{i-1}\|^2/\gamma_i$ on the directions orthogonal to $\operatorname{span}\{r_{i-1},q_i\}$, so $A_i\neq0$ unless the stage quantizes exactly), so reconstruction and commitment gradients accumulate across all $N$ stages and diverge as the residuals approach the boundary (Appendix~\ref{app:leak}). To circumvent this recursive instability, we decouple the intermediate residuals from the computational graph. Applying a stop-gradient to every intermediate residual ($r_i\leftarrow\operatorname{sg}[r_i]$ for $i\ge 1$) lets the codes propagate their forward values into the aggregation (Eq.~\ref{eq:hrq_residual}) without backpropagating the reconstruction gradient through the cascade. Instead, the encoder receives the reconstruction gradient via a single d-HSTE step from the aggregate reconstruction $\hat z = q_1\oplus_c(q_2\oplus_c(\cdots\oplus_c q_{N}))$ directly to $r_0=z_e$. The full decoder gradient $g_{\hat z}:=\partial L_{\text{rec}}/\partial\hat z$ is transported using Eq.~\ref{eq:hste_backward} with $q=\hat z$, yielding the block-level estimator
\begin{equation}
    \label{eq:hste_riemannian}
    \frac{\partial L_{\text{rec}}}{\partial z_e}
    = \frac{1}{\lambda^c_{\hat z}\,\lambda^c_{z_e}}\,\operatorname{gyr}[z_e,-\hat z]\,g_{\hat z} .
\end{equation}
The reconstruction signal reaches the encoder as a single, depth-independent gradient copy rather than through the leaking per-stage cascade, and the commitment terms are filtered identically: stage $i$'s commitment loss pulls $r_{i-1}$ toward $q_i$, but for $i>1$ the residual lies behind the stop-gradient, so only the coarsest $i=1$ term survives. Codebooks remain optimizable through their per-stage codebook loss, and at zero curvature the formulation recovers standard Euclidean residual vector quantization up to a rescaling (Appendix~\ref{app:curvature}). Figure~\ref{fig:ghrq_scheme} summarizes the complete forward pass.

\begin{figure}[t]
    \centering
    \includegraphics[width=0.7\textwidth]{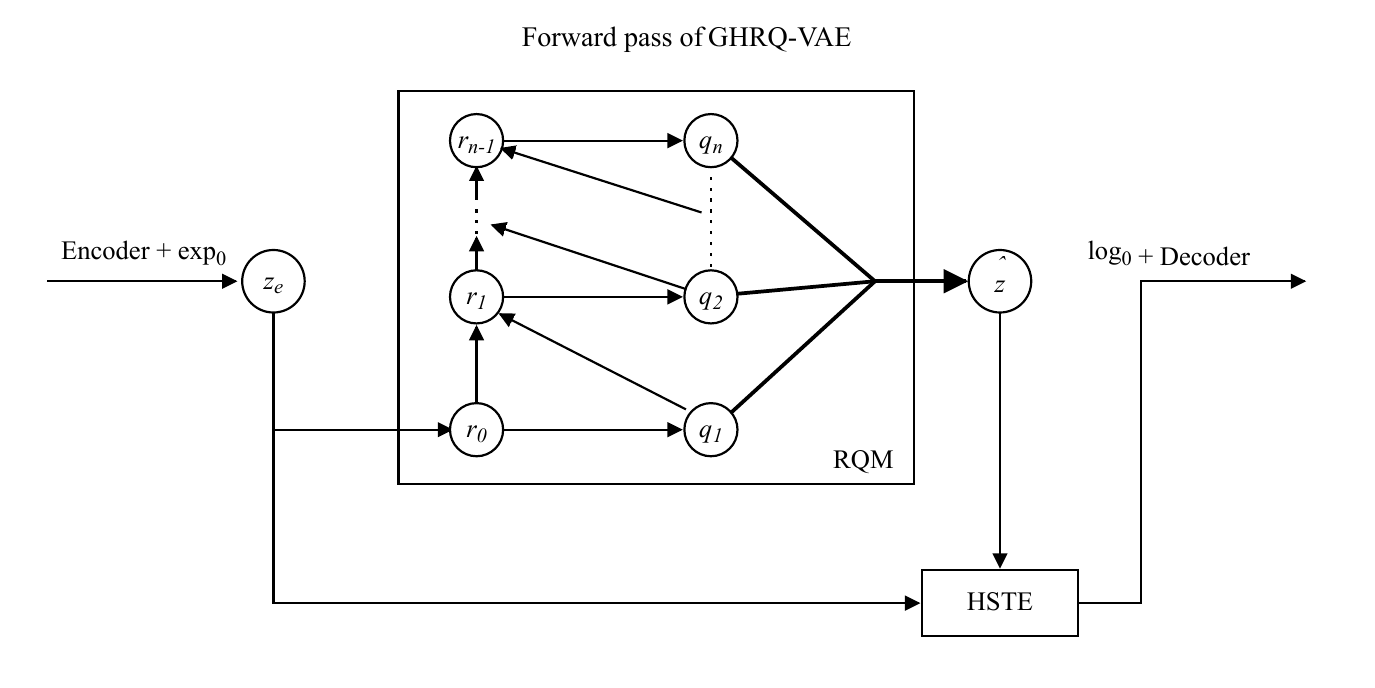}
    \caption{Forward pass of GHRQ. The encoder output $z_e$ is residual-quantized into coarse-to-fine codes $q_1,\dots,q_{N}$, recombined into $\hat{z}$. The block-level gradient routing of \S\ref{sec:blockste} sends a discounted-HSTE gradient from $\hat z$ directly back to $z_e$, bypassing the intermediate codes and residuals. All backward passes through the Residual Quantization Module (RQM) are blocked, except the path $\hat{z} \to q_1\to z_e$, left intact to preserve the coarsest commitment signal as in the Euclidean case. In the naive case, gradients would instead flow and accumulate through all residuals and quantizations of the RQM.}
    \label{fig:ghrq_scheme}
\end{figure}

\section{Experimental Setup}
\label{chap:expsetup}
The methodology of \S\ref{chap:methods} provides a geometry- and architecture-agnostic framework for residual quantization on the Poincar\'e ball. We test the quantizer across four tasks spanning a shallow regime ($N=4$), typical of prior hyperbolic residual quantization studies, and a deep regime ($N=12$), where numerical instabilities become pronounced. Across all experiments, the quantizer geometry and gradient routing are the sole independent variables; the encoder, decoder, optimizer, data pipeline, and evaluation protocols remain fixed within each task. We compare against two baselines with identical configurations. The Euclidean baseline employs standard residual vector quantization ($c=0$) with an identity STE and an additive residual recursion. The naive hyperbolic baseline directly adapts prior hyperbolic residual quantization methods~\cite{hrqvae} ($c=1$): codebooks reside on the Poincar\'e ball and assignments use squared geodesic distance, but the estimator retains the Euclidean identity STE and employs M\"obius addition (Eq.~\ref{eq:prev_residual}). Curvature is set to $c=1$ for hyperbolic models. Encoder and decoder architectures, codebook sizes, optimizers, and training budgets are identical across the three configurations within each task; they are specified in full in Appendix~\ref{app:arch}. The source code will be made publicly available.

\paragraph{Tasks and datasets.}
\label{sec:tasks}
(i) \emph{WordNet hypernymy prediction}~\cite{nickel2017poincare} embeds the $82{,}115$ noun synsets of the WordNet hierarchy in the shallow regime ($N=4$), training the encoder with a contrastive InfoNCE objective ($50$ negatives) on the \emph{closure} split, i.e.\ the transitive closure of the hypernymy DAG, in which a pair $(u,v)$ is positive whenever $v$ is any ancestor of $u$ rather than only its direct hypernym. (ii) \emph{Generative sequential recommendation} follows the semantic-ID paradigm~\cite{tiger}, mapping items of the Amazon Reviews Beauty corpus~\cite{amazonbeauty} (leave-one-out protocol) to discrete codes ($N=4$) over frozen MPNet~\cite{mpnet} sentence embeddings, from which a downstream sequence model generates semantic IDs autoregressively. (iii) \emph{Image reconstruction and generation} uses MNIST~\cite{mnist} and CIFAR-100~\cite{cifar} (whose $100$ fine classes form $20$ coarse superclasses, used only for unsupervised taxonomy evaluation), quantizing a convolutional VQ-VAE~\cite{vqvae} tokenizer ($N=4$) over which an RQ-Transformer prior~\cite{lee2022rqvae} is trained to draw $10{,}000$ samples. (iv) \emph{Neural audio coding} employs a SoundStream-style neural codec~\cite{soundstream,encodec,academicodec} on LibriTTS \texttt{train-clean-100} at $24$\,kHz, the deepest stack ($N=12$). Due to the divergence of conformal factors near the boundary, the hyperbolic codec necessitates explicit encoder stabilization, specifically an auto-calibrated regularizer and quantizer-depth dropout~\cite{soundstream} (Appendix~\ref{app:arch_audio}).

\paragraph{Metrics.}
\label{sec:metrics}
Each configuration is evaluated on task-specific performance and, where applicable, on the structural hierarchy of the learned latent space. WordNet is scored by Recall@10 on the closure split, code-tuple uniqueness, and intra-cluster semantic coherence (WordNet path and Wu--Palmer tree similarity~\cite{wupalmer}, GloVe cosine similarity~\cite{glove}); recommendation by Recall@5/10, NDCG@5/10~\cite{ndcg} and the pre-deduplication uniqueness ratio; images by reconstruction MSE, FID and IS~\cite{inceptionscore}, and unsupervised CIFAR-100 superclass hierarchy recovery via Adjusted Rand Index (ARI)~\cite{ari}, Normalized Mutual Information (NMI), and dendrogram purity; audio by reconstruction loss and perceptual rate-distortion (PESQ~\cite{pesq}, SI-SDR~\cite{sisdr}) against entropy-estimated bitrates.

\section{Results}
\label{chap:experiments}
We evaluate the three quantizer configurations on the four tasks, separating three questions: whether hyperbolic residual quantization improves hierarchical organization, whether our geometric corrections improve stability over naive baselines, and whether these benefits translate into compression. Prior work~\cite{hrqvae} shows hyperbolic residual quantization improves WordNet (\S\ref{sec:exp_nlp}) and sequential recommendation (\S\ref{sec:exp_rec}); on these we assess whether GHRQ-VAE outperforms the naive baseline. We additionally present the first working application of hyperbolic residual quantization to a convolutional image tokenizer (\S\ref{sec:exp_image}) and a deep ($N=12$) neural audio codec (\S\ref{sec:exp_audio}).

\subsection{WordNet Hypernymy Prediction}
\label{sec:exp_nlp}
\begin{table}[t]
\centering
\caption{WordNet hypernymy reconstruction (Recall@10, closure split) across varying encoder dimensions ($d \in \{8, 16\}$) and per-stage codebook sizes ($b \in \{64, 128\}$), with $N=4$.}
\label{tab:nlp_grid}
\begin{tabular}{lcccc}
\hline
Configuration & $d8/b64$ & $d8/b128$ & $d16/b64$ & $d16/b128$ \\
\hline
Euclidean & $75.8$ & $75.4$ & $76.7$ & $75.9$ \\
Naive hyperbolic & $\mathbf{88.2}$ & $62.3$ & $\mathbf{86.9}$ & $\mathbf{83.4}$ \\
GHRQ-VAE (ours) & $81.9$ & $\mathbf{78.7}$ & $83.8$ & $81.9$ \\
\hline
\end{tabular}
\end{table}

Table~\ref{tab:nlp_grid} reports the Recall@10 for hypernymy reconstruction across varying model capacities. While hyperbolic formulations consistently outperform the Euclidean baseline, the naive hyperbolic baseline is unstable. Although it achieves peak recall in specific configurations, it collapses significantly in others ($62.3\%$ at $d8/b128$). Furthermore, its performance may be inflated by lower codebook usage: Table~\ref{tab:nlp_repr} shows that the naive approach yields a low uniqueness ratio ($0.888$), assigning identical code sequences to disparate concepts, thereby reducing the target space for the reconstructor. Conversely, GHRQ-VAE provides a stable optimization profile and maintains high codebook utilization (uniqueness $0.967$). Crucially, as shown in Table~\ref{tab:nlp_repr}, GHRQ-VAE clusters the vocabulary into more semantically coherent groups, consistently outperforming the naive approach across multiple similarity metrics.

\begin{table}[t]
\centering
\caption{WordNet latent representation quality evaluated at the standard capacity setting ($d16/b128$) across all $82{,}115$ noun synsets. The semantic similarity of concepts mapped to identical codes is evaluated using WordNet Path, Wu--Palmer tree similarity, and GloVe cosine similarity, with higher values indicating coherent clustering.}
\label{tab:nlp_repr}
\begin{tabular}{lcccc}
\hline
Configuration & uniq.\ ratio & path sim & wup sim & GloVe sim \\
\hline
Naive hyperbolic & $0.888$ & $0.078$ & $0.242$ & $0.097$ \\
GHRQ-VAE (ours) & $\mathbf{0.967}$ & $\mathbf{0.088}$ & $\mathbf{0.262}$ & $\mathbf{0.162}$ \\
\hline
\end{tabular}
\end{table}

\subsection{Generative Sequential Recommendation}
\label{sec:exp_rec}
The recommendation task evaluates whether the improved hierarchical code space translates to downstream seq2seq generative recommendation. As shown in Table~\ref{tab:rec_beauty}, both hyperbolic methods outperform the Euclidean baseline. GHRQ-VAE achieves the highest performance on the majority of ranking metrics, as well as the highest codebook usage. 

\begin{table}[h]
\centering
\caption{Downstream recommendation performance on the Amazon Beauty dataset. We report the test-set ranking metrics and the pre-deduplication uniqueness ratio (a proxy for codebook health).}
\label{tab:rec_beauty}
\begin{tabular}{lccccc}
\hline
Configuration & uniq.\ ratio & R@5 & NDCG@5 & R@10 & NDCG@10 \\
\hline
Euclidean & $0.960$ & $0.0352$ & $0.0242$ & $0.0530$ & $0.0299$ \\
Naive hyperbolic & $0.870$ & $0.0388$ & $0.0259$ & $\mathbf{0.0606}$ & $0.0329$ \\
GHRQ-VAE (ours) & $\mathbf{0.971}$ & $\mathbf{0.0393}$ & $\mathbf{0.0264}$ & $0.0604$ & $\mathbf{0.0332}$ \\
\hline
\end{tabular}
\end{table}

\subsection{Image Reconstruction and Generation}
\label{sec:exp_image}
The image domain scores the same trained codes across three criteria: compression fidelity (reconstruction MSE), token quality for generative modeling (RQ-Transformer FID/IS), and unsupervised taxonomy discovery. The Euclidean baseline yields the lowest reconstruction error (Table~\ref{tab:img_recon}); both hyperbolic configurations incur a $19$--$36\%$ relative MSE penalty, with GHRQ-VAE marginally behind the naive lift, reflecting that hyperbolic methods are weaker at raw signal reconstruction. When assessing structural organization (Table~\ref{tab:img_hier}), the ranking inverts: GHRQ-VAE increases the Adjusted Rand Index from the Euclidean baseline's $0.048$ to $0.087$ ($+81\%$) and similarly leads in NMI and purity. For datasets possessing a latent taxonomy, the capacity surrendered at the reconstruction stage is recovered here as a more robust global taxonomy. Generation is dataset-dependent (Table~\ref{tab:img_gen}): on MNIST both hyperbolic configurations achieve superior FID, whereas on CIFAR-100 the Euclidean baseline leads, with GHRQ-VAE the strongest hyperbolic alternative.

\begin{table}[t]
\centering
\begin{minipage}{0.48\textwidth}
\centering
\caption{Image reconstruction loss ($\times10^{-3}$). Euclidean achieves the lowest error.}
\label{tab:img_recon}
\begin{tabular}{lcc}
\hline
Configuration & MNIST & CIFAR-100 \\
\hline
Euclidean & $\mathbf{0.477}$ & $\mathbf{1.077}$ \\
Naive hyperbolic & $0.600$ & $1.280$ \\
GHRQ-VAE (ours) & $0.647$ & $1.360$ \\
\hline
\end{tabular}
\end{minipage}
\hfill
\begin{minipage}{0.48\textwidth}
\centering
\caption{Unsupervised CIFAR-100 superclass recovery.}
\label{tab:img_hier}
\begin{tabular}{lccc}
\hline
Configuration & ARI$\uparrow$ & NMI$\uparrow$ & purity$\uparrow$ \\
\hline
Euclidean & $0.048$ & $0.469$ & $0.327$ \\
Naive hyperbolic & $0.055$ & $0.481$ & $0.337$ \\
GHRQ-VAE (ours) & $\mathbf{0.087}$ & $\mathbf{0.515}$ & $\mathbf{0.370}$ \\
\hline
\end{tabular}
\end{minipage}
\end{table}

\begin{table}[t]
\centering
\caption{Image generation performance with an autoregressive RQ-Transformer ($10{,}000$ samples, single seed). We report FID (lower is better) and IS in parentheses (higher is better). }
\label{tab:img_gen}
\begin{tabular}{lcc}
\hline
Configuration & MNIST FID$\downarrow$ (IS$\uparrow$) & CIFAR-100 FID$\downarrow$ (IS$\uparrow$) \\
\hline
Euclidean & $20.36\ (2.069)$ & $\mathbf{94.67}\ (3.86)$ \\
Naive hyperbolic & $\mathbf{15.01}\ (2.105)$ & $101.84\ (3.51)$ \\
GHRQ-VAE (ours) & $16.78\ (2.068)$ & $98.23\ (3.80)$ \\
\hline
\end{tabular}
\end{table}

\subsection{Neural Audio Coding}
\label{sec:exp_audio}
The neural audio coding task uses the deep configuration ($N=12$) to evaluate the scalability of the proposed block-level estimator under extended depth. The hyperbolic codec requires an auto-calibrated encoder-scale control and uniform quantizer-depth dropout to prevent representation collapse across all hyperbolic configurations. Figure~\ref{fig:audio_rd} reports the perceptual rate--distortion behavior across the full depth sweep ($N\in\{1,2,4,8,12\}$), plotting PESQ-wb and SI-SDR against the empirical entropy rate. The Euclidean baseline maintains a superior Pareto frontier at every bitrate, confirming that flat geometry remains preferable for pure signal compression. Among the hyperbolic methods, however, GHRQ-VAE dominates the naive M\"obius lift almost everywhere, attaining higher perceptual quality at matched entropy rate across nearly all operating points and dimensions.

\begin{figure}[h]
\centering
\includegraphics[width=\textwidth]{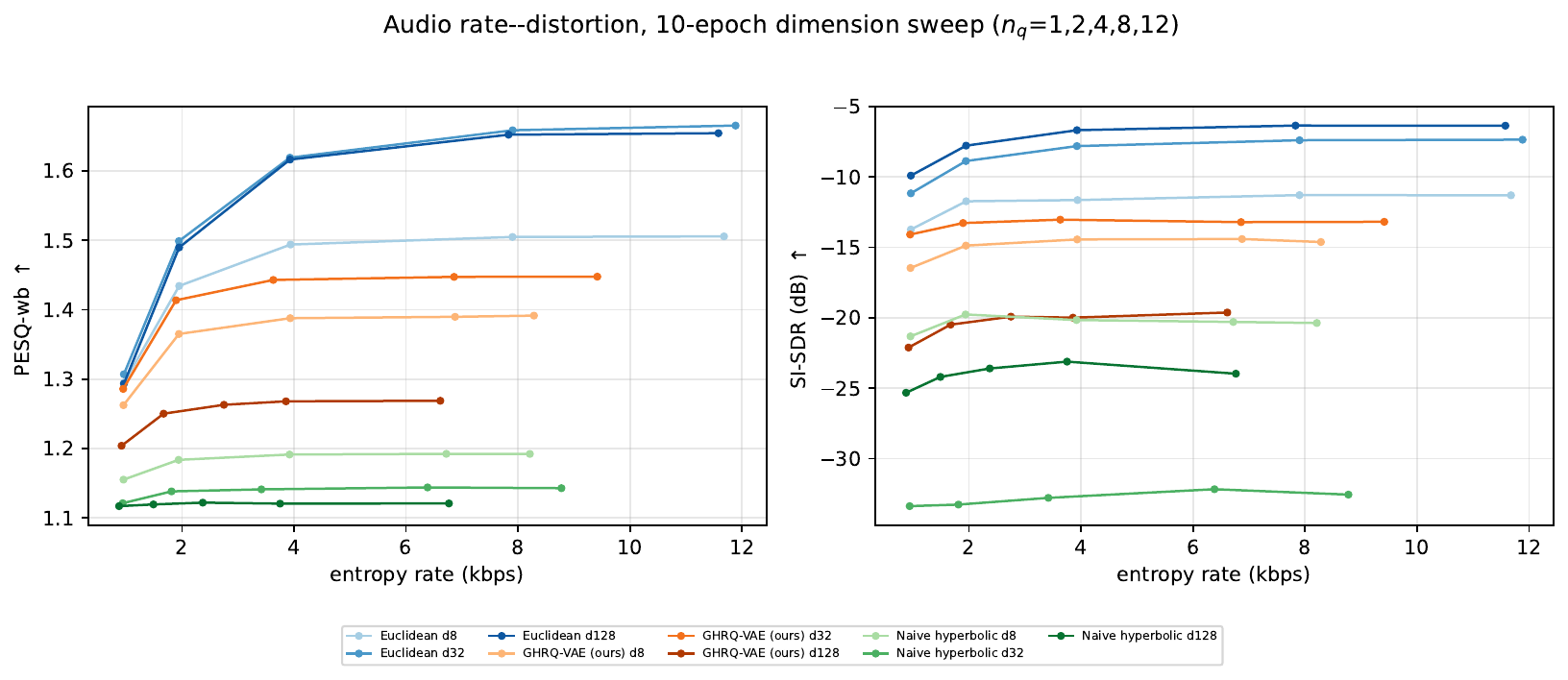}
\caption{Rate--distortion performance across varying quantization depths ($N\in\{1,2,4,8,12\}$), plotted against the empirical entropy rate. Individual markers denote distinct operating points. Color indicates the quantizer variant (blue: Euclidean; orange: GHRQ-VAE; green: naive hyperbolic lift), while shading represents the codebook dimensionality ($d\in\{8,32,128\}$). Left: PESQ-wb. Right: SI-SDR. The curves plateau when subsequent residual stages increase the empirical entropy rate without proportional gains in perceptual quality.}
\label{fig:audio_rd}
\end{figure}

\subsection{Residual Reconstruction Error Across Tasks}
\label{sec:exp_approx}
This section evaluates whether GHRQ-VAE lets the residual cascade accurately reconstruct the encoder output on the Poincar\'e ball, overcoming the gyration drift of the naive baseline (Eq.~\ref{eq:prev_residual}), quantified by the squared hyperbolic distance between the aggregated codes and the encoder output. The results (Table~\ref{tab:approx}) corroborate the predictions of \S\ref{sec:newmethod}: GHRQ-VAE reconstructs the encoder representation with high precision, yielding residual errors near zero.

\begin{table}[t]
\centering
\caption{Mean validation reconstruction error on the Poincar\'e ball, measured as the squared hyperbolic distance between the reconstructed code and the encoder output (audio trained for $10$ epochs). Lower is better; the best result per row is in bold.}
\label{tab:approx}
\begin{tabular}{lcc}
\hline
Task & Naive hyperbolic & GHRQ-VAE (ours) \\
\hline
WordNet hypernymy ($d16/b128$)  & $14.4$    & $\mathbf{12.9}$ \\
Sequential recommendation & $44.79$ & $\mathbf{0.031}$ \\
Image reconstruction (MNIST)     & $0.00049$ & $\mathbf{<\!10^{-5}}$ \\
Image reconstruction (CIFAR-100) & $0.00133$ & $\mathbf{<\!10^{-5}}$ \\
Neural audio coding (mean over $d\in\{8,32,128\}$) & $0.521$ & $\mathbf{0.0044}$ \\
\hline
\end{tabular}
\end{table}

\subsection{Ablation}
\label{ch:ablation}
\label{sec:abl_components}
GHRQ-VAE combines two modifications to the naive baseline: the HRA forward ordering and the block-level d-HSTE gradient. To identify which one drives the reduction in on-ball residual error, we ablate each in turn while holding all other hyperparameters fixed: \emph{HRA only} keeps the forward ordering but reverts the gradient to the identity STE, and \emph{d-HSTE only} applies the corrected gradient over the naive M\"obius aggregation (Eq.~\ref{eq:prev_residual}). Table~\ref{tab:abl_residual} reports the residual error of the two ablations against the full model.

\begin{table}[t]
\centering
\caption{Component ablation on the on-ball residual error ($N=4$), measured as the mean validation squared hyperbolic distance between the recomposed codes and the encoder output; image columns are scaled by $10^{-3}$. The full-model values correspond to those of Table~\ref{tab:approx}. Bold indicates the best result per column.}
\label{tab:abl_residual}
\begin{tabular}{cc|cccc}
\hline
HRA & d-HSTE & WordNet$\downarrow$ & Recommendation$\downarrow$ & MNIST ($\times10^{-3}$)$\downarrow$ & CIFAR ($\times10^{-3}$)$\downarrow$ \\
\hline
\checkmark & $\times$     & $\mathbf{10.3}$ & $5.49$          & $0.01$          & $0.03$ \\
$\times$   & \checkmark   & $30.4$          & $0.12$          & $0.02$          & $0.05$ \\
\checkmark & \checkmark   & $12.9$          & $\mathbf{0.03}$ & $\mathbf{0.00}$ & $\mathbf{0.00}$ \\
\hline
\end{tabular}
\end{table}

HRA guarantees telescoping algebraically, yet when paired with the uncorrected gradient it is the \emph{least} faithful configuration on recommendation. Symmetrically, dropping HRA is harmless on recommendation and images but inflates the WordNet error to $30.4$. The two modifications are thus complementary rather than independent: HRA's telescoping guarantee holds only at the operating radii that d-HSTE trains the encoder to visit, while the corrected gradient recomposes faithfully only when the forward pass is ordered to telescope. Each component supplies the precondition the other requires, and the residual error collapses uniformly only when both are present.

\section{Conclusion and Discussion}
Across four domains, a consistent picture emerges: hyperbolic curvature is most valuable for structure, not compression. Euclidean models retain the edge on raw reconstruction and on CIFAR-100 generation quality, yet the curved latent space organizes hierarchy more naturally, yielding substantial gains in unsupervised CIFAR-100 superclass recovery and WordNet hypernymy reconstruction. Relative to the naive hyperbolic lift, GHRQ-VAE is more robust, less prone to instability and codebook collapse, and its geometric corrections align the residual codes with the encoder point on the manifold. The value of GHRQ-VAE thus lies in stable hierarchical structuring rather than signal fidelity.

This separation also reframes an open question around the naive lift. Despite being geometrically inexact, it remains strong at shallow depth, attaining the best WordNet recall and leading on recommendation R@10. A plausible cause is its leaked per-stage gradient: geometrically inaccurate, yet behaving as a low-variance, on-average-correct directional signal; a rigorous study of how to fold it into a geometrically consistent estimator is one direction for future work.

Closing this gap, and following the field's shift toward diffusion and flow-matching generation \cite{resgen, hypdae, bu2025ggball}, evaluating hyperbolic tokenizers within these continuous paradigms and within large-scale modern tokenizers is a natural next step.

\bibliographystyle{splncs04}
\bibliography{bibentries}

\clearpage
\setcounter{section}{0}
\renewcommand{\thesection}{\Alph{section}}
\renewcommand{\theHsection}{app.\Alph{section}}
\begin{center}
    {\Large\bfseries Supplementary Material\par}
\end{center}
\noindent
This supplementary material contains the two items deferred from the main paper. Appendix~\ref{app:arch} gives the complete architecture, dataset-processing, and optimization details of every experiment, so that all four setups are reproducible from the description alone. Appendix~\ref{app:proofs} collects proofs of claims that the main paper states without derivation, and we show the recovery of Euclidean RQ at zero curvature.

\section{Architecture and Implementation Details}
\label{app:arch}

All experiments were run on a SLURM-managed cluster of NVIDIA A100 and H100 GPUs. Within each task the encoder, decoder, downstream model, data pipeline, and evaluation protocol are held fixed; the quantizer geometry and gradient routing are the only independent variables. Every configuration of a given task shares the number of residual stages $N$, the codebook sizes, and the encoder/decoder architectures, so that differences in the reported metrics are attributable to the quantizer alone.

\subsection{Quantizer Configurations}
\label{app:arch_configs}
\paragraph{Euclidean (baseline).} Standard residual vector quantization at $c=0$, with the identity straight-through estimator and the additive recursion $r_i=r_{i-1}-q_i$, $\hat z=\sum_i q_i$.

\paragraph{Naive hyperbolic.} A direct adaptation of prior hyperbolic residual quantization~\cite{hrqvae} at $c=1$. Codebooks are parameters on the Poincar\'e ball and assignments minimize the squared geodesic distance, i.e.\ $q(x)=c_k$ with
\begin{equation*}
    k=\operatorname*{argmin}_{j} d^2_{\mathbb{D}_c}(x,c_j) ,
\end{equation*}
but the estimator keeps the Euclidean identity STE and the left-associated M\"obius aggregation of Eq.~\ref{eq:prev_residual}.

\paragraph{GHRQ-VAE (ours).} Identical to the naive hyperbolic configuration except for the two repairs of \S\ref{chap:methods}: the HRA forward convention (Eq.~\ref{eq:hrq_residual}) and block-level routing of a single d-HSTE step (Eq.~\ref{eq:hste_riemannian}), with the numerically stable gyration of Eq.~\ref{eq:gyr_stable}.

\subsection{WordNet Hypernymy Prediction}
\label{app:arch_wordnet}
The task embeds the WordNet noun hierarchy, comprising $82{,}115$ noun synsets and their hypernymy edges as extracted with NLTK. A fully-connected embedding network maps each synset to a $16$-dimensional vector, which is projected onto the manifold and quantized with $N=4$ stages and $128$ codes per stage in the standard setting; the capacity grid of Table~\ref{tab:nlp_grid} sweeps encoder dimension $d\in\{8,16\}$ and per-stage codebook size $b\in\{64,128\}$. The encoder is trained with a contrastive InfoNCE objective that contrasts each positive hypernymy edge against $50$ sampled negatives; negatives are drawn from outside the transitive closure so that true ancestors are never sampled as negatives. Evaluation uses the \emph{closure} split, which requires composing transitive relations to reconstruct held-out edges. Recall@10 is measured with an autoregressive seq2seq model that predicts hypernym code tuples under beam search, trained for $10$ epochs on the frozen representations. For calibration, a no-model graph-composition baseline reaches roughly $80\%$ Recall@10 on this split and a global-popularity baseline roughly $41\%$.

\subsection{Generative Sequential Recommendation}
\label{app:arch_rec}
We follow the semantic-ID paradigm~\cite{tiger}: each item is mapped to a tuple of discrete codes by residual quantization, and a sequence model predicts the codes of the next item from the user's interaction history. The data is the Beauty category of the Amazon Reviews 2014 corpus~\cite{amazonbeauty} under the standard leave-one-out protocol. Items are first encoded into $768$-dimensional sentence embeddings with a pretrained MPNet sentence-transformer~\cite{mpnet}; these embeddings are frozen and serve as the input to the quantizer. The quantizer is an RQ-VAE~\cite{lee2022rqvae,hrqvae} with a $768\!\to\!512\!\to\!32$ MLP encoder and a symmetric decoder, $N=4$ stages and $128$ codes per stage. The downstream recommender is an encoder--decoder Transformer of model dimension $384$ with $6$ layers, $6$ attention heads, feed-forward dimension $1024$ and dropout $0.1$, which generates semantic IDs autoregressively by beam search with $50$ beams over a history length of $20$. As is standard, a uniqueness tie-break token is appended to disambiguate items that map to identical code tuples.

\subsection{Image Reconstruction and Generation}
\label{app:arch_image}
The image experiments use MNIST~\cite{mnist} ($28\times28$ grayscale; $60{,}000$ training and $10{,}000$ test images) and CIFAR-100~\cite{cifar} ($32\times32$ RGB; $50{,}000$ training and $10{,}000$ test images). CIFAR-100 supplies a two-level label hierarchy, $100$ fine classes grouped into $20$ coarse superclasses, which is used only to score unsupervised taxonomy recovery and never as a training signal. Pixel values are normalized to $[-1,1]$ and no data augmentation is applied, so that the quantizer's effect is isolated.

The tokenizer is a convolutional VQ-VAE~\cite{vqvae}. The encoder is a stack of strided $2$-D convolutions with channel widths $\text{in}\to32\to64\to128\to D$, giving a spatial downsampling factor of four, followed by the residual quantizer; the decoder mirrors it with transposed convolutions. We use $N=4$ stages with latent dimension $D=8$ and $128$ codes per stage on MNIST, and $D=16$ with $512$ codes per stage on CIFAR-100. For generation, an RQ-Transformer prior~\cite{lee2022rqvae} consisting of a spatial and a depth transformer, each with $4$ layers, $8$ heads and model dimension $256$, is trained over the frozen quantizer, and $10{,}000$ samples are drawn from it for FID and IS.

\subsection{Neural Audio Coding}
\label{app:arch_audio}
The codec is a SoundStream-style neural audio codec~\cite{soundstream,encodec} built on the AcademiCodec implementation~\cite{academicodec}, using a SEANet convolutional encoder--decoder paired with a $12$-stage residual quantizer with $1024$ codes per stage. It is trained on the \texttt{train-clean-100} subset of LibriTTS at a $24$\,kHz sample rate. This is the deepest evaluated stack and drives the intermediate residuals close to the boundary of the ball, which is precisely the regime in which the leaked per-stage gradient of Appendix~\ref{app:leak} diverges. Training uses an adversarial objective with multi-scale STFT, multi-period and multi-scale waveform discriminators, together with reconstruction and feature-matching losses; the adversarial terms are switched on after $500$ steps. All configurations are trained under a $10$-epoch budget, and the rate--distortion sweep of Fig.~\ref{fig:audio_rd} additionally varies $N\in\{1,2,4,8,12\}$ and $d\in\{8,32,128\}$.

\paragraph{Encoder-scale control.} High-dimensional encoder outputs have tangent norm of order $\sqrt{d}$ and are therefore mapped essentially onto the boundary by $\exp^c_0$, which saturates the geodesic distance and produces vanishing gradients. We counter this with an auto-calibrated global multiplier applied to the encoder tangent vectors, tuned so that the median residual radius is $0.5$. The multiplier is maintained by an exponential moving average, $s\leftarrow 0.99\,s+0.01\,s_{\text{batch}}$, which prevents deep residuals from drifting toward the boundary over training. Uniform quantizer-depth dropout~\cite{soundstream} is applied alongside it; without both mechanisms all hyperbolic configurations collapse.

\subsection{Optimization}
\label{app:arch_hparams}
Base parameters are optimized with AdamW and manifold parameters (the codebooks) with Riemannian Adam~\cite{geoopt}, except on WordNet, where the encoder is trained with Riemannian SGD. Curvature is fixed at $c=1$ for all hyperbolic models. Table~\ref{tab:supp_hparams} lists the per-task budgets, learning rates and loss weights.

\begin{table}[h]
\centering
\caption{Per-task optimization settings. ``lr'' is the base learning rate and ``codebook lr'' the learning rate of the manifold parameters; $\beta$ is the commitment weight. Downstream models (seq2seq reconstructor, recommender, RQ-Transformer prior) are trained on frozen quantizers.}
\label{tab:supp_hparams}
\begin{tabular}{lccccc}
\hline
Task & optimizer & epochs & lr & codebook lr & $\beta$ \\
\hline
WordNet encoder      & Riem.\ SGD & $50$   & $1.0$              & $1.0$      & $1.0$  \\
\quad + seq2seq recall model & AdamW & $10$ & $10^{-4}$ & --- & --- \\
Recommendation RQ-VAE & AdamW & $5000$ & $10^{-4}$          & $10^{-4}$  & $0.01$ \\
\quad + recommender  & AdamW & $100$  & $10^{-4}$          & ---        & ---    \\
Image VQ-VAE         & AdamW & $50$   & $3\times10^{-4}$   & $10^{-4}$  & $0.25$ \\
Audio codec          & AdamW & $10$   & $3\times10^{-4}$   & $10^{-4}$  & $0.25$ \\
\hline
\end{tabular}
\end{table}

\noindent
The recommendation RQ-VAE additionally weights its reconstruction term by $1000$ relative to the commitment term, which is the setting under which all three quantizer configurations converge. 

\subsection{Evaluation Protocol}
\label{app:arch_metrics}
WordNet path similarity is the inverse graph distance in the taxonomy and Wu--Palmer similarity~\cite{wupalmer} derives relatedness from the depth of the lowest common ancestor; both are averaged over pairs of synsets that receive identical code tuples, alongside the cosine similarity of their GloVe embeddings~\cite{glove}. Recommendation reports Recall@$5$/$10$ and NDCG@$5$/$10$~\cite{ndcg} on held-out items, with the pre-deduplication uniqueness ratio of generated semantic IDs as a diagnostic of codebook utilization. Image reconstruction is scored by the best validation MSE and generation by FID and IS~\cite{inceptionscore} over $10{,}000$ samples. CIFAR-100 hierarchy recovery is computed by agglomeratively clustering the mean code embeddings of the $100$ fine classes into $20$ groups and comparing them against the ground-truth superclasses via the Adjusted Rand Index~\cite{ari}, Normalized Mutual Information and purity. Audio reports the best validation reconstruction loss and perceptual rate--distortion in PESQ~\cite{pesq} and SI-SDR~\cite{sisdr} against entropy-estimated bitrates.

\section{Proofs}
\label{app:proofs}

Throughout, $\mathbb{D}^d_c=\{x\in\mathbb{R}^d: c\|x\|^2<1\}$ carries the M\"obius addition of Eq.~\ref{eq:bg_mobius_add}, and $\ominus x := -x$. We write $\lambda^c_x=2/(1-c\|x\|^2)$ for the conformal factor and $\operatorname{gyr}[u,v]$ for the gyration of Eq.~\ref{eq:bg_gyration}. We use two standard facts about the M\"obius gyrogroup $(\mathbb{D}^d_c,\oplus_c)$~\cite{ungar2009gyrovector,ganea2018hyperbolic}.

\begin{itemize}
    \item[(G1)] \emph{Left gyroassociativity.} $a\oplus_c(b\oplus_c w)=(a\oplus_c b)\oplus_c\operatorname{gyr}[a,b]\,w$, together with $a\oplus_c(-a)=0$, $\operatorname{gyr}[a,-a]=\mathrm{Id}$, and $\operatorname{gyr}[a,0]=\mathrm{Id}$.
    \item[(G2)] \emph{Gyrations are rotations.} For every $u,v\in\mathbb{D}^d_c$ the map $\operatorname{gyr}[u,v]:\mathbb{R}^d\to\mathbb{R}^d$ is linear and orthogonal, so $\|\operatorname{gyr}[u,v]\,w\|=\|w\|$ for all $w$; it acts as the identity on the orthogonal complement of $\operatorname{span}\{u,v\}$.
\end{itemize}

\subsection{The Left-Cancellation Law}
\label{app:leftcancel}

\begin{proposition}
\label{prop:leftcancel}
For all $a,b\in\mathbb{D}^d_c$, $\;a\oplus_c\big((-a)\oplus_c b\big)=b$, which is Eq.~\ref{eq:left_cancel}.
\end{proposition}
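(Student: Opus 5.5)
The cleanest route uses the standard facts (G1) stated just above, rather than expanding Eq.~\ref{eq:bg_mobius_add} by brute force. Apply left gyroassociativity with $b\mapsto -a$ and $w\mapsto b$:
\begin{equation*}
a\oplus_c\big((-a)\oplus_c b\big)=\big(a\oplus_c(-a)\big)\oplus_c\operatorname{gyr}[a,-a]\,b .
\end{equation*}
Then invoke the two companion identities in (G1), $a\oplus_c(-a)=0$ and $\operatorname{gyr}[a,-a]=\mathrm{Id}$, to reduce the right-hand side to $0\oplus_c b$. Finally, check directly from Eq.~\ref{eq:bg_mobius_add} that $0$ is a left identity. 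Setting $x=0$ makes the numerator $(1+c\|b\|^2)\cdot 0+(1-0)\,b=b$ and the denominator $1+0+0=1$, so $0\oplus_c b=b$.

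Since (G1) is quoted from the gyrovector literature, the two auxiliary identities deserve a quick sanity check.

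\textbf{First identity.} $a\oplus_c(-a)=0$ follows immediately from Eq.~\ref{eq:bg_mobius_add}. With $y=-a$ the numerator is
\begin{equation*}
(1-2c\|a\|^2+c\|a\|^2)\,a-(1-c\|a\|^2)\,a=0,
\end{equation*}
while the denominator is $(1-c\|a\|^2)^2>0$ on the ball.

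\textbf{Second identity.} This one is the main potential obstacle. By the definition in Eq.~\ref{eq:bg_gyration},
\begin{equation*}
\operatorname{gyr}[a,-a]\,w=\ominus\big(a\oplus_c(-a)\big)\oplus_c\big(a\oplus_c((-a)\oplus_c w)\big)=0\oplus_c\big(a\oplus_c((-a)\oplus_c w)\big),
\end{equation*}
which equals $a\oplus_c((-a)\oplus_c w)$. So proving $\operatorname{gyr}[a,-a]=\mathrm{Id}$ from this definition alone is equivalent to the left-cancellation law itself, and the argument would be circular unless (G1) is accepted as an external fact. I would therefore rely on (G1) as cited~\cite{ungar2009gyrovector}.

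\textbf{Self-contained fallback.} If a direct proof is wanted instead, compute $u:=(-a)\oplus_c b$ explicitly, writing $\alpha=\langle a,b\rangle$, $A=\|a\|^2$, $B=\|b\|^2$. Note that $u$ lies in $\operatorname{span}\{a,b\}$, so $a\oplus_c u$ is determined by $A$, $\langle a,u\rangle$ and $\|u\|^2$. The key intermediate quantities are
\begin{equation*}
1+2c\langle a,u\rangle+c^2A\|u\|^2=\frac{(1-cA)(1+2c\alpha+c^2AB)\,(\cdots)}{D^2}\quad\text{with } D=1-2c\alpha+c^2AB,
\end{equation*}
and the analogous coefficient of $a$. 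The plan is to show that the coefficient of $a$ in the numerator of $a\oplus_c u$ vanishes and that the coefficient of $b$ equals the denominator. The cleanest way to organize this algebra is the polynomial identity
\begin{equation*}
1+2c\langle a,u\rangle+c\|u\|^2=\frac{(1-cA)(1+c B\cdots)}{D},
\end{equation*}
and I expect this bookkeeping to be the only laborious step.
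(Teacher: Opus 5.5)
Your main argument is the paper's own proof: left gyroassociativity (G1) with middle argument $-a$, then $a\oplus_c(-a)=0$, $\operatorname{gyr}[a,-a]=\mathrm{Id}$ and $0\oplus_c b=b$. Your remark is also accurate: under the definition in Eq.~\ref{eq:bg_gyration}, $\operatorname{gyr}[a,-a]=\mathrm{Id}$ is equivalent to the claim itself, and the paper likewise simply imports (G1) from the gyrovector literature.

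If you write out the optional direct computation, your first displayed formula needs correcting. The denominator of $a\oplus_c u$ is $(1-cA)^2/D$, with no factor $1+2c\alpha+c^2AB$. The coefficient of $a$ is $1+2c\langle a,u\rangle+c\|u\|^2=(1-cA)(1-2c\alpha+cB)/D$. This exactly cancels the $a$-component of $(1-cA)u$, and the remaining $b$-coefficient $(1-cA)^2/D$ equals the denominator, giving $b$.
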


\begin{proof}
Apply left gyroassociativity (G1) with the middle argument $-a$:
\begin{equation*}
    a\oplus_c\big((-a)\oplus_c b\big)
    = \big(a\oplus_c(-a)\big)\oplus_c\operatorname{gyr}[a,-a]\,b
    = 0\oplus_c\mathrm{Id}\,b
    = b,
\end{equation*}
using $a\oplus_c(-a)=0$, $\operatorname{gyr}[a,-a]=\mathrm{Id}$, and the fact that $0$ is the identity element of $\oplus_c$.\qed
\end{proof}

The asymmetry that HRA exploits is that the corresponding \emph{right}-hand statement is false: $\big(b\oplus_c(-a)\big)\oplus_c a\neq b$ in general, because $\oplus_c$ is not associative. A residual update that subtracts on the left is therefore inverted by an aggregation that adds on the left, and by no other.

\subsection{The HRA Residual Mismatch is a Pure Rotation}
\label{app:hra_rotation}

Let $\hat z_i:=q_1\oplus_c(q_2\oplus_c(\cdots\oplus_c q_i))$ be the aggregate of the first $i$ codes and let
\begin{equation}
    \label{eq:app_true_residual}
    R_i^{\text{true}}:=(-\hat z_i)\oplus_c z_e,
    \qquad\text{so that}\qquad
    \hat z_i\oplus_c R_i^{\text{true}}=z_e
\end{equation}
be the true residual, the exact part of $z_e$ not yet captured by the first $i$ codes. This is the quantity each codebook is meant to be fitted to, and it is in general distinct from the tracked residual $r_i$.

\begin{proposition}
\label{prop:rotation}
Under the HRA recursion, for every $i=1,\dots,N$,
\begin{equation}
    \label{eq:app_gamma}
    R_i^{\text{true}}=\Gamma_i\,r_i,
    \qquad
    \Gamma_i=\prod_{k=1}^{i-1}\operatorname{gyr}\!\big[q_k,\,u_{k+1}\big],
    \qquad
    u_k:=q_k\oplus_c\big(q_{k+1}\oplus_c(\cdots\oplus_c q_i)\big),
\end{equation}
which is Eq.~\ref{eq:hrq_gyration}; here $u_i=q_i$, $u_1=\hat z_i$, the product is taken in increasing $k$ from left to right, and it is empty (hence $\mathrm{Id}$) when $i=1$. Since $\Gamma_i$ is a composition of gyrations it is orthogonal, and therefore
\begin{equation}
    \label{eq:app_norm}
    \big\|R_i^{\text{true}}\big\|=\big\|r_i\big\| .
\end{equation}
\end{proposition}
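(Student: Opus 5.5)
We argue by a backward induction that collapses the right-nested expression for $z_e$ from the inside out, using only left gyroassociativity (G1), the left-cancellation law of Proposition~\ref{prop:leftcancel}, and the orthogonality of gyrations (G2). Fix $i\in\{1,\dots,N\}$. First we record the exact telescoping identity for the prefix of length $i$. By Proposition~\ref{prop:leftcancel} with $a=q_k$, $b=r_{k-1}$, the HRA update $r_k=(-q_k)\oplus_c r_{k-1}$ gives $q_k\oplus_c r_k=r_{k-1}$ for every $k$. Unrolling from $k=i$ down to $k=1$ then yields
\begin{equation*}
z_e=r_0=q_1\oplus_c\big(q_2\oplus_c(\cdots\oplus_c(q_i\oplus_c r_i))\big).
\end{equation*}

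Next we define the partial nests $S_k:=q_k\oplus_c\big(q_{k+1}\oplus_c(\cdots\oplus_c(q_i\oplus_c r_i))\big)$ for $k=i,i-1,\dots,1$, so that $S_1=z_e$ and $S_{k-1}=q_{k-1}\oplus_c S_k$. The inductive claim is
\begin{equation*}
S_k=u_k\oplus_c G_k\,r_i,\qquad G_k:=\prod_{j=k}^{i-1}\operatorname{gyr}[q_j,u_{j+1}],
\end{equation*}
with the product taken in increasing $j$ from left to right and $G_i=\mathrm{Id}$. The base case $k=i$ is immediate, since $u_i=q_i$. For the step, apply (G1) with $a=q_{k-1}$, $b=u_k$, $w=G_k r_i$:
\begin{equation*}
S_{k-1}=q_{k-1}\oplus_c\big(u_k\oplus_c G_kr_i\big)=\big(q_{k-1}\oplus_c u_k\big)\oplus_c\operatorname{gyr}[q_{k-1},u_k]\,G_k r_i .
\end{equation*}
Here $q_{k-1}\oplus_c u_k=u_{k-1}$ by the definition of $u$, and $\operatorname{gyr}[q_{k-1},u_k]\,G_k=G_{k-1}$ because the new factor is prepended on the left. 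This matches the left-to-right increasing order in Eq.~\ref{eq:app_gamma}. We also need $G_kr_i$ to lie in the ball for (G1) to apply; this holds by (G2), since the map is orthogonal and so $\|G_kr_i\|=\|r_i\|<1/\sqrt c$.

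At $k=1$ the induction gives $z_e=\hat z_i\oplus_c\Gamma_i r_i$, since $u_1=\hat z_i$ and $G_1=\Gamma_i$. Substituting into the definition of the true residual, we have $R_i^{\text{true}}=(-\hat z_i)\oplus_c\big(\hat z_i\oplus_c\Gamma_ir_i\big)$. Proposition~\ref{prop:leftcancel} with $a=-\hat z_i$ (so that $-a=\hat z_i$) reduces this to $\Gamma_i r_i$, which is the first claim. The norm identity Eq.~\ref{eq:app_norm} then follows from (G2), since a composition of orthogonal maps is orthogonal.

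The only delicate point is the bookkeeping in the inductive step. The gyration produced by (G1) must land on the \emph{left} of the previously accumulated product, so that the final ordering agrees with the stated convention for $\Gamma_i$. In addition, the $u_k$ must be understood relative to the fixed $i$ rather than to $N$. Everything else reduces to the two gyrogroup facts already recorded.
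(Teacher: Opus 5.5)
Your proof is correct and follows essentially the paper's argument: your nests $S_k$ are the paper's $w_k$, and your invariant $S_k=u_k\oplus_c G_k r_i$ is, after one left cancellation, exactly the paper's intermediate identity $(-u_k)\oplus_c w_k=G_k r_i$. The only difference is that you get each step directly from (G1), correctly using (G2) to keep $G_k r_i$ inside the ball, instead of citing the gyrotranslation identity $-(a\oplus_c b)\oplus_c(a\oplus_c w)=\operatorname{gyr}[a,b]\big((-b)\oplus_c w\big)$; this makes your version self-contained given (G1), (G2) and the left-cancellation law.
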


\begin{proof}
We use the gyrotranslation identity~\cite{ungar2009gyrovector}
\begin{equation}
    \label{eq:app_left_cancel_gyr}
    -(a\oplus_c b)\oplus_c(a\oplus_c w)=\operatorname{gyr}[a,b]\big((-b)\oplus_c w\big).
\end{equation}
Alongside the tail aggregates $u_k$ of Eq.~\ref{eq:app_gamma}, set $w_k:=q_k\oplus_c\big(q_{k+1}\oplus_c(\cdots\oplus_c(q_i\oplus_c r_i))\big)$ for $k=1,\dots,i$, so that $u_k=q_k\oplus_c u_{k+1}$ and $w_k=q_k\oplus_c w_{k+1}$. Proposition~\ref{prop:leftcancel} with $a=q_k$ and $b=r_{k-1}$ gives the telescoping identity $q_k\oplus_c r_k=r_{k-1}$, so $w_i=r_{i-1}$ and, descending, $w_k=r_{k-1}$; in particular $w_1=r_0=z_e$, while $u_1=\hat z_i$.

Applying Eq.~\ref{eq:app_left_cancel_gyr} with $a=q_k$, $b=u_{k+1}$ and $w=w_{k+1}$ gives, for $k=1,\dots,i-1$,
\begin{equation*}
    (-u_k)\oplus_c w_k
    = -(q_k\oplus_c u_{k+1})\oplus_c(q_k\oplus_c w_{k+1})
    = \operatorname{gyr}[q_k,u_{k+1}]\big((-u_{k+1})\oplus_c w_{k+1}\big),
\end{equation*}
while at $k=i$ the left-cancellation law gives $(-u_i)\oplus_c w_i=(-q_i)\oplus_c(q_i\oplus_c r_i)=r_i$. Composing these $i-1$ steps,
\begin{equation*}
    R_i^{\text{true}}=(-\hat z_i)\oplus_c z_e=(-u_1)\oplus_c w_1
    =\Big(\prod_{k=1}^{i-1}\operatorname{gyr}[q_k,u_{k+1}]\Big)\,r_i=\Gamma_i\,r_i,
\end{equation*}
which is Eq.~\ref{eq:app_gamma}; for $i=1$ the product is empty and $R_1^{\text{true}}=(-q_1)\oplus_c z_e=r_1$ directly. Each factor of $\Gamma_i$ is orthogonal by (G2) and a product of orthogonal maps is orthogonal, so $\|R_i^{\text{true}}\|=\|\Gamma_i r_i\|=\|r_i\|$.\qed
\end{proof}

Proposition~\ref{prop:rotation} is the precise sense in which HRA repairs the forward pass: the mismatch between the residual the quantizer tracks and the residual it ought to track is a rotation about the origin, which contributes \emph{zero} magnitude error at every depth. The coarse-to-fine magnitude decomposition on which residual quantization rests therefore remains faithful, in contrast with the naive convention, whose mismatch is a drift that corrupts $\|r_i\|$ and accumulates with $i$.

\subsection{The Residual Gradient Leaks on the Ball}
\label{app:leak}

In Euclidean residual quantization the identity STE has Jacobian $\partial q_i/\partial r_{i-1}=I$ and the residual update is additive, so
\begin{equation}
    \label{eq:app_firewall}
    \frac{\partial r_i}{\partial r_{i-1}}=\frac{\partial(r_{i-1}-q_i)}{\partial r_{i-1}}=I-I=0 .
\end{equation}
Thanks to this exact cancellation the residual branch transmits nothing, so the encoder receives exactly one copy of the decoder gradient (through the shortest path, whose empty product of residual Jacobians is $I$) together with the lone $i=1$ commitment term, independently of the depth $N$. Proposition~\ref{prop:leak} shows the cancellation fails on the ball.

\begin{proposition}
\label{prop:leak}
Let $r:=r_{i-1}$, $q:=q_i$ and let $J_i:=\partial q_i/\partial r_{i-1}$ be the straight-through Jacobian. For $r_i=r_{i-1}\oplus_c(-q_i)$,
\begin{equation}
    \label{eq:app_leak}
    A_i:=\frac{\partial r_i}{\partial r_{i-1}}
    = D_1\!\oplus_c(r,-q)\;-\;D_2\!\oplus_c(r,-q)\,J_i ,
\end{equation}
where $D_1\!\oplus_c$ and $D_2\!\oplus_c$ are the Jacobians of M\"obius addition in its first and second argument. At $c=0$ one has $D_1=D_2=I$, so the straight-through convention $J_i=I$ gives $A_i=0$. For $c>0$ and $J_i=I$, however, whenever $\operatorname{span}\{r,q\}^{\perp}\neq\{0\}$ --- in particular for every $d\ge3$ --- one has $A_i=0$ if and only if $q_i=r_{i-1}$: it leaks at every stage at which the quantization is not exact.
\end{proposition}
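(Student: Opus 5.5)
The plan is to handle the displayed formula with the chain rule, read off the Euclidean case directly, and prove the equivalence $A_i=0\iff q_i=r_{i-1}$ by testing $A_i$ on one well-chosen direction (for ``only if'') and using a constancy argument (for ``if'').

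\emph{Step 1: the Jacobian formula and the Euclidean case.} Write $r_i=F(r,-q)$ with $F(x,y)=x\oplus_c y$. The chain rule gives $\partial r_i/\partial r=D_1F(r,-q)+D_2F(r,-q)\,\partial(-q)/\partial r$, which is Eq.~\ref{eq:app_leak} with $J_i=\partial q/\partial r$. At $c=0$, Eq.~\ref{eq:bg_mobius_add} reduces to $x+y$, so $D_1=D_2=I$ and $J_i=I$ gives $A_i=0$.

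\emph{Step 2 (``only if''): a nonzero eigenvalue on $\operatorname{span}\{r,q\}^\perp$.} Take $c>0$ and $J_i=I$, so $A_i=D_1F(r,-q)-D_2F(r,-q)$. Fix a nonzero $w\perp\operatorname{span}\{r,q\}$; such a $w$ always exists when $d\ge3$, since the span has dimension at most $2$.

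Write $F=N/\gamma$ with
\[
N=(1+2c\langle x,y\rangle+c\|y\|^2)\,x+(1-c\|x\|^2)\,y,\qquad \gamma=1+2c\langle x,y\rangle+c^2\|x\|^2\|y\|^2 .
\]
Differentiate at $(x,y)=(r,-q)$ in the direction $w$. Because $w$ is orthogonal to both $x$ and $y$, every directional derivative of an inner product or squared norm vanishes, so $\gamma$ and both scalar coefficients of $N$ are stationary. Only the explicit vector factors $x$ or $y$ vary, which gives
\[
D_1F\,w=\frac{1+2c\langle x,y\rangle+c\|y\|^2}{\gamma}\,w,\qquad D_2F\,w=\frac{1-c\|x\|^2}{\gamma}\,w .
\]
Subtracting and substituting $x=r$, $y=-q$ yields
\[
A_i w=\frac{c\big(\|r\|^2-2\langle r,q\rangle+\|q\|^2\big)}{\gamma}\,w=\frac{c\|r-q\|^2}{\gamma}\,w ,
\]
which is the factor quoted in the main text.

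To conclude I need $\gamma>0$ on the ball. This is the standard fact that the Möbius denominator is positive, and it follows from $\gamma\ge(1-\sqrt c\|x\|\sqrt c\|y\|)^2>0$ by Cauchy--Schwarz. Hence $A_i w\neq0$ whenever $q\ne r$, so $A_i\ne0$.

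\emph{Step 3 (``if''): constancy along the diagonal.} If $q=r$, consider the curve $t\mapsto F(r+tv,-(r+tv))=0$, which is constant by (G1). Its derivative at $t=0$ is $D_1F\,v-D_2F\,v=A_i v$, evaluated exactly at $(r,-r)$. Therefore $A_i v=0$ for every $v$, i.e.\ $A_i=0$.

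\emph{Main obstacle.} The delicate part is Step~2, namely correctly tracking which terms of the quotient's derivative survive in the orthogonal direction. I would double-check that the dependence of $\gamma$ on $w$ truly vanishes to first order, and that the $c\|y\|^2$ and $-c\|x\|^2$ terms combine into $\|r-q\|^2$ after the sign flip $y=-q$.
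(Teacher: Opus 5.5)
Your proof is correct, and it follows the paper's outline: chain rule, a test vector $w\perp\operatorname{span}\{r,q\}$ on which $A_i$ acts as the scalar $c\|r-q\|^2/\gamma$, positivity of $\gamma$ by Cauchy--Schwarz, and a separate check at $q=r$. It differs from the paper in two local steps.

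\emph{The second Jacobian.} The paper differentiates the quotient only for $D_1\oplus_c$. For $D_2\oplus_c$ it uses heavier machinery:
\begin{itemize}
\item it invokes that $y\mapsto x\oplus_c y$ is an isometry with differential $(\lambda^c_y/\lambda^c_{x\oplus_c y})\operatorname{gyr}[x,y]$;
\item it reduces the prefactor to $\beta/\gamma$ via $1-c\|x\oplus_c y\|^2=(1-c\|x\|^2)(1-c\|y\|^2)/\gamma$;
\item it appeals to (G2) to see that the gyration fixes $\operatorname{span}\{x,y\}^\perp$.
\end{itemize}
Your observation is that every scalar coefficient of $N$ and $\gamma$ is stationary along $w$. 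This treats both arguments symmetrically with one elementary computation and needs none of that machinery. The paper's route gains something in exchange: it exhibits all of $D_2\oplus_c$ as a scaled rotation, not just its restriction to the orthogonal complement.

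\emph{The converse at $q=r$.} The paper evaluates both Jacobians at the antipodal pair, $D_1=D_2=(1-c\|r\|^2)^{-1}I$, using $\alpha=\beta$, $\alpha x+\beta y=0$ and $\operatorname{gyr}[x,-x]=\mathrm{Id}$. You instead differentiate the identity $F(x,-x)\equiv 0$ along the diagonal, which gives $D_1F-D_2F=0$ at $(r,-r)$ without computing either Jacobian. Your argument is shorter; the paper's gives the explicit common value.

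Two small remarks. First, you do not need $d\ge3$ to produce $w$: the proposition assumes $\operatorname{span}\{r,q\}^\perp\neq\{0\}$, and your argument uses exactly that, so it covers the general hypothesis. Second, the paper also computes the radial action in the collinear case. That computation serves only the surrounding discussion of growth near the boundary and is not needed for the stated equivalence.
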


Unrolling the recursion with $r_0=z_e$, the chain rule now routes \emph{both} signals of Eq.~\ref{eq:rqvae_loss} to the encoder through these leak products,
\begin{equation}
    \label{eq:app_stack}
    \nabla_{r_0}\mathcal{L}
    = \underbrace{\sum_{i=1}^{N}\Big(\textstyle\prod_{j=1}^{i-1}A_j^{\top}\Big)J_i^{\top}\,\nabla_{q_i}L_{\text{rec}}}_{\text{reconstruction, via the codes}}
    \;+\;
    \underbrace{\sum_{i=1}^{N}\Big(\textstyle\prod_{j=1}^{i-1}A_j^{\top}\Big)\nabla_{r_{i-1}}L^{\text{commit}}_i}_{\text{commitment, via the residuals}},
\end{equation}
with the empty product at $i=1$ equal to $I$. Every stage contributes, so instead of the two clean copies of the Euclidean case the encoder collects a superposition of $N$ reconstruction and $N$ commitment terms, each filtered through a different-length product of leak matrices. Proposition~\ref{prop:leak} shows the individual factors do not vanish, and the explicit form obtained in its proof (Eq.~\ref{eq:app_leak_perp}) makes their growth explicit: as the residuals approach the boundary of the ball, $\|A_j\|$ diverges, so the superposition amplifies with both depth and radius. This is why the forward repair alone is insufficient and the backward pass must be repaired independently: applying stop-gradients to $r_i$ for $i\ge1$ sets every $A_j$ path to zero by construction, and the single d-HSTE hop of Eq.~\ref{eq:hste_riemannian} reinstates the one depth-independent copy of the reconstruction gradient that Eq.~\ref{eq:app_firewall} used to guarantee.

It remains to prove Proposition~\ref{prop:leak}; the computation below quantifies the leak but is not needed to follow the argument above.

\begin{proof}[of Proposition~\ref{prop:leak}]
Equation~\ref{eq:app_leak} is the chain rule applied to the two arguments of $\oplus_c$, the second contributing through $q_i=q(r_{i-1})$ with a minus sign; at $c=0$ M\"obius addition is ordinary addition, so $D_1=D_2=I$ and the two terms cancel, which is Eq.~\ref{eq:app_firewall}.

Let $c>0$, put $\delta:=q-r$ and $\gamma:=1-2c\langle r,q\rangle+c^2\|r\|^2\|q\|^2$, and set $x:=r$, $y:=-q$, so that $x\oplus_c y=(\alpha x+\beta y)/\gamma$ with $\alpha:=1+2c\langle x,y\rangle+c\|y\|^2$ and $\beta:=1-c\|x\|^2$; Cauchy--Schwarz gives $\gamma\ge(1-c\|x\|\|y\|)^2>0$. Differentiating the quotient in $x$ and applying the result to a $w\perp\operatorname{span}\{x,y\}$, every term carrying a factor $\langle x,w\rangle$ or $\langle y,w\rangle$ drops out and only $D_1\!\oplus_c(x,y)\,w=(\alpha/\gamma)\,w$ survives. For the second argument, $y\mapsto x\oplus_c y$ is an isometry of $\mathbb{D}^d_c$, whose differential is $D_2\!\oplus_c(x,y)=(\lambda^c_y/\lambda^c_{x\oplus_c y})\operatorname{gyr}[x,y]$; the identity $1-c\|x\oplus_c y\|^2=(1-c\|x\|^2)(1-c\|y\|^2)/\gamma$ reduces the prefactor to $\beta/\gamma$, and $\operatorname{gyr}[x,y]$ fixes $\operatorname{span}\{x,y\}^{\perp}$ pointwise by (G2), so $D_2\!\oplus_c(x,y)\,w=(\beta/\gamma)\,w$ there. With $J_i=I$ this gives $A_i w=(\alpha-\beta)\gamma^{-1}w$, and since
\begin{equation*}
    \alpha-\beta=c\big(\|q\|^2+\|r\|^2-2\langle r,q\rangle\big)=c\|\delta\|^2 ,
\end{equation*}
the leak acts on $\operatorname{span}\{r,q\}^{\perp}$ as the strictly positive scalar
\begin{equation}
    \label{eq:app_leak_perp}
    A_i\,w=\frac{c\,\|\delta\|^2}{\gamma}\,w ,
    \qquad w\perp\operatorname{span}\{r,q\} .
\end{equation}
In the collinear case $r=\rho e$ and $q=\kappa e$ ($\|e\|=1$), $\oplus_c$ restricts to the one-dimensional law $\rho\oplus_c(-\kappa)=(\rho-\kappa)/(1-c\rho\kappa)$ with $\gamma=(1-c\rho\kappa)^2$, whose two partial derivatives are $(1-c\kappa^2)/\gamma$ and $(1-c\rho^2)/\gamma$; their difference gives the action on the radial direction,
\begin{equation}
    \label{eq:app_leak_1d}
    A_i\,e=\frac{c\,(\rho^2-\kappa^2)}{(1-c\rho\kappa)^2}\,e
    =\frac{2}{(1-c\rho\kappa)^2}\left(\frac{1}{\lambda^c_q}-\frac{1}{\lambda^c_r}\right)e ,
\end{equation}
using $1-c\|x\|^2=2/\lambda^c_x$.

Finally, $c\|\delta\|^2/\gamma=0$ forces $\delta=0$. Conversely, if $q=r$ then $y=-x$, so $\alpha=\beta=1-c\|r\|^2$, $\alpha x+\beta y=0$ and $\operatorname{gyr}[x,-x]=\mathrm{Id}$, whence $D_1=D_2=(1-c\|r\|^2)^{-1}I$ and $A_i=0$.\qed
\end{proof}

The same conclusion holds under the HRA convention: with $r_i=(-q_i)\oplus_c r_{i-1}$ the roles of $D_1$ and $D_2$ are exchanged and the identical computation gives $A_iw=-c\|\delta\|^2\gamma^{-1}w$ on $\operatorname{span}\{r_{i-1},q_i\}^{\perp}$, of the same magnitude.

\subsection{Exactness of the Numerically Stable Gyration}
\label{app:gyration}

The backward pass evaluates $\operatorname{gyr}[z_e,-q]$ when its base points nearly coincide, since $q$ quantizes $z_e$ (and at the block level $\hat z\approx z_e$). Writing the closed form as $\operatorname{gyr}[A,B]\,v=v+2(aA+bB)/d$ with $A=z_e$ and $B=-q$,
\begin{align}
    \label{eq:app_gyr_raw}
    a &= -c^2\langle A,v\rangle\|B\|^2+c\langle B,v\rangle+2c^2\langle A,B\rangle\langle B,v\rangle,\notag\\
    b &= -c^2\langle B,v\rangle\|A\|^2-c\langle A,v\rangle,\notag\\
    d &= 1+2c\langle A,B\rangle+c^2\|A\|^2\|B\|^2 .
\end{align}
With $\delta:=q-z_e$ small one has $A\approx-B$, and the evaluation cancels catastrophically twice: $d$ collapses to $\approx(1-c\|z_e\|^2)^2$, computed as a difference of $O(1)$ quantities that itself vanishes at the boundary, while $aA+bB=a\,z_e-b\,q$ subtracts two nearly equal vectors because $a\approx b$. Proposition~\ref{prop:gyrstable} states that the reformulation used in Eq.~\ref{eq:gyr_stable} is not an approximation but an algebraic identity.

\begin{proposition}
\label{prop:gyrstable}
With $A=z_e$, $B=-q$ and $\delta=q-z_e$, the quantities of Eq.~\ref{eq:app_gyr_raw} satisfy, exactly,
\begin{align}
    \label{eq:app_gyr_stable}
    d   &= (1-c\|z_e\|^2)^2-2c\,(1-c\|z_e\|^2)\,\langle z_e,\delta\rangle+c^2\|z_e\|^2\|\delta\|^2,\notag\\
    a-b &= -c\,(1-c\|z_e\|^2)\,\langle\delta,v\rangle-c^2\langle z_e,v\rangle\,\|\delta\|^2+2c^2\langle z_e,\delta\rangle\,\langle\delta,v\rangle,
\end{align}
and the numerator decomposes as $aA+bB=\tfrac{a-b}{2}(q+z_e)-\tfrac{a+b}{2}\,\delta$.
\end{proposition}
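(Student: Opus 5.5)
Since all three claims are polynomial identities in $c$, $z_e$, $\delta$ and $v$, the plan is direct substitution and expansion. Throughout we substitute $A=z_e$ and $B=-q=-(z_e+\delta)$ into Eq.~\ref{eq:app_gyr_raw}. We abbreviate $s:=1-c\|z_e\|^2$. The basic expansions are
\[
\langle A,B\rangle=-\|z_e\|^2-\langle z_e,\delta\rangle,\qquad
\|B\|^2=\|z_e\|^2+2\langle z_e,\delta\rangle+\|\delta\|^2,\qquad
\langle B,v\rangle=-\langle z_e,v\rangle-\langle\delta,v\rangle .
\]

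\textbf{The denominator $d$.} We insert these expansions into $d=1+2c\langle A,B\rangle+c^2\|A\|^2\|B\|^2$. This gives
\[
1-2c\|z_e\|^2-2c\langle z_e,\delta\rangle+c^2\|z_e\|^4+2c^2\|z_e\|^2\langle z_e,\delta\rangle+c^2\|z_e\|^2\|\delta\|^2 .
\]
We then group the terms by order in $\delta$. The zeroth-order terms are $1-2c\|z_e\|^2+c^2\|z_e\|^4=s^2$. The first-order terms are $-2c\langle z_e,\delta\rangle(1-c\|z_e\|^2)=-2cs\langle z_e,\delta\rangle$. The remaining term $c^2\|z_e\|^2\|\delta\|^2$ is second order. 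Together these reproduce the first line of Eq.~\ref{eq:app_gyr_stable}.

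\textbf{The coefficient gap $a-b$.} We expand $a$ and $b$ separately in powers of $\delta$. For $b$ we get
\[
b=c^2\|z_e\|^2\big(\langle z_e,v\rangle+\langle\delta,v\rangle\big)-c\langle z_e,v\rangle .
\]
For $a$ we get
\[
a=-c^2\langle z_e,v\rangle\|B\|^2-c\big(\langle z_e,v\rangle+\langle\delta,v\rangle\big)+2c^2\big(\|z_e\|^2+\langle z_e,\delta\rangle\big)\big(\langle z_e,v\rangle+\langle\delta,v\rangle\big).
\]
Subtracting, the order-zero parts cancel exactly. These are the $O(1)$ terms responsible for the catastrophic cancellation, so checking this cancellation is the central step.

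Next, the terms linear in $\delta$ should assemble into $-c\,s\langle\delta,v\rangle$. The $\langle z_e,\delta\rangle\langle z_e,v\rangle$ contributions cancel between the $-c^2\langle z_e,v\rangle\|B\|^2$ piece and the $2c^2$ piece. The remaining linear terms are $-c\langle\delta,v\rangle+2c^2\|z_e\|^2\langle\delta,v\rangle-c^2\|z_e\|^2\langle\delta,v\rangle$, which equals $-cs\langle\delta,v\rangle$.

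Finally, the quadratic terms give $-c^2\langle z_e,v\rangle\|\delta\|^2+2c^2\langle z_e,\delta\rangle\langle\delta,v\rangle$. This matches the second line of Eq.~\ref{eq:app_gyr_stable}.

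\textbf{The numerator decomposition.} This part is a one-line identity. With $A=z_e$ and $B=-q$ we have $aA+bB=a z_e-bq$. We write $a=\tfrac{a+b}{2}+\tfrac{a-b}{2}$ and $b=\tfrac{a+b}{2}-\tfrac{a-b}{2}$. Substituting gives
\[
a z_e-bq=\tfrac{a-b}{2}(z_e+q)-\tfrac{a+b}{2}(q-z_e),
\]
and since $q-z_e=\delta$, this is the stated decomposition.

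The only real obstacle is the bookkeeping in the $a-b$ expansion. We plan to handle it by collecting coefficients order by order in $\delta$, rather than fully expanding everything at once, so that the exact cancellation of the zeroth-order part is visible directly.
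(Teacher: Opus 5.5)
Your proposal is correct and follows essentially the same route as the paper. Both substitute $A=z_e$, $B=-(z_e+\delta)$ into the raw closed form and expand, and the numerator decomposition is the same one-line rearrangement. The only difference is bookkeeping: you collect terms by order in $\delta$ directly, whereas the paper first separates the $O(c)$ and $O(c^2)$ parts of $a-b$ and then expands.
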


\begin{proof}
Write $z:=z_e$, so $q=z+\delta$, $\langle A,B\rangle=-\|z\|^2-\langle z,\delta\rangle$, $\|B\|^2=\|z\|^2+2\langle z,\delta\rangle+\|\delta\|^2$ and $\|A\|^2=\|z\|^2$. Substituting into $d$,
\begin{align*}
    d &= 1-2c\|z\|^2-2c\langle z,\delta\rangle+c^2\|z\|^2\big(\|z\|^2+2\langle z,\delta\rangle+\|\delta\|^2\big)\\
      &= \big(1-2c\|z\|^2+c^2\|z\|^4\big)-2c\langle z,\delta\rangle\big(1-c\|z\|^2\big)+c^2\|z\|^2\|\delta\|^2,
\end{align*}
which is the first line of Eq.~\ref{eq:app_gyr_stable} since the first bracket is $(1-c\|z\|^2)^2$. Every term after the first is $O(\|\delta\|)$, so no $O(1)$ cancellation occurs.

For the second line, substitute $\langle A,v\rangle=\langle z,v\rangle$ and $\langle B,v\rangle=-\langle q,v\rangle$ into Eq.~\ref{eq:app_gyr_raw}:
\begin{align*}
    a &= -c^2\langle z,v\rangle\|q\|^2-c\langle q,v\rangle+2c^2\langle z,q\rangle\langle q,v\rangle,\\
    b &= c^2\langle q,v\rangle\|z\|^2-c\langle z,v\rangle .
\end{align*}
The two terms of order $c$ combine into $c\big(\langle z,v\rangle-\langle q,v\rangle\big)=-c\langle\delta,v\rangle$. For the remaining terms, expand $q=z+\delta$ in
\begin{equation*}
    c^{-2}\big(a-b\big)_{O(c^2)}
    = -\langle z,v\rangle\|q\|^2-\langle q,v\rangle\|z\|^2+2\langle z,q\rangle\langle q,v\rangle .
\end{equation*}
Using $\|q\|^2=\|z\|^2+2\langle z,\delta\rangle+\|\delta\|^2$, $\langle q,v\rangle=\langle z,v\rangle+\langle\delta,v\rangle$ and $\langle z,q\rangle=\|z\|^2+\langle z,\delta\rangle$, the coefficient of $\langle z,v\rangle\|z\|^2$ is $-1-1+2=0$ and the coefficient of $\langle z,v\rangle\langle z,\delta\rangle$ is $-2+2=0$, so both $O(1)$ contributions cancel identically. What survives is
\begin{equation*}
    \|z\|^2\langle\delta,v\rangle-\langle z,v\rangle\|\delta\|^2+2\langle z,\delta\rangle\langle\delta,v\rangle .
\end{equation*}
Multiplying by $c^2$ and adding $-c\langle\delta,v\rangle$ gives
\begin{equation*}
    a-b=-c\langle\delta,v\rangle+c^2\|z\|^2\langle\delta,v\rangle-c^2\langle z,v\rangle\|\delta\|^2+2c^2\langle z,\delta\rangle\langle\delta,v\rangle,
\end{equation*}
which is the second line of Eq.~\ref{eq:app_gyr_stable} after collecting $-c\langle\delta,v\rangle(1-c\|z\|^2)$. Finally,
\begin{align*}
    \tfrac{a-b}{2}(q+z)-\tfrac{a+b}{2}(q-z)
    &= \tfrac12\big(aq+az-bq-bz\big)-\tfrac12\big(aq-az+bq-bz\big)\\
    &= a\,z-b\,q = aA+bB .
\end{align*}
\qed
\end{proof}

Both expressions in Eq.~\ref{eq:app_gyr_stable} are sums of terms of comparable, small magnitude, so the $O(1)$ contributions cancel symbolically rather than in floating point. The combination $a+b$ is evaluated directly, as $a$ and $b$ share a sign and no cancellation arises there. The estimator is therefore mathematically identical to the closed form of Eq.~\ref{eq:app_gyr_raw} while remaining finite as $1-c\|z_e\|^2\to0$.

\subsection{Recovery of Euclidean Residual Quantization at Zero Curvature}
\label{app:curvature}

\begin{proposition}
\label{prop:curvature}
At $c=0$, GHRQ reduces to standard Euclidean residual vector quantization: the forward pass coincides with it exactly, and the d-HSTE backward operator equals $\tfrac14 I$, that is, the identity straight-through estimator up to a constant rescaling.
\end{proposition}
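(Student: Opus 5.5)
The plan is to substitute $c=0$ into every ingredient of GHRQ and check the forward and backward passes separately.

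\emph{Forward pass.} Set $c=0$ in the M\"obius addition of Eq.~\ref{eq:bg_mobius_add}. The numerator becomes $x+y$ and the denominator becomes $1$, so $x\oplus_0 y=x+y$ and $-x$ is the ordinary additive inverse. The HRA residual update of Eq.~\ref{eq:hrq_residual} then reads $r_i=(-q_i)+r_{i-1}=r_{i-1}-q_i$. The reverse-nested aggregate collapses to $\hat z=\sum_{i=1}^N q_i$, since ordinary addition is associative and the bracketing is immaterial.

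The nearest-neighbour assignment also reduces to the Euclidean one. The map $\tanh^{-1}(\sqrt{c}\,t)/\sqrt{c}$ tends to $t$ as $c\to0$, so $d_{\mathbb{D}_c}(x,y)\to 2\|x-y\|$. The factor $2$ is a positive constant and does not change the $\operatorname{argmin}$. The losses of Eq.~\ref{eq:rqvae_loss} therefore agree with the standard RQ-VAE objective up to a constant factor $4$ in the distance terms; I would remark that this can be absorbed into the loss weights. The ball $\{c\|x\|^2<1\}$ becomes all of $\mathbb{R}^d$, and $\exp_0^c$ and $\log_0^c$ reduce to the identity in the limit, so no projection distorts the forward values. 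Hence the forward pass coincides with Euclidean RVQ.

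\emph{Backward pass.} Evaluate Eq.~\ref{eq:hste_riemannian} at $c=0$. The conformal factor is $\lambda^0_x=2/(1-0)=2$ for every $x$, so the prefactor is $1/(\lambda^0_{\hat z}\lambda^0_{z_e})=1/4$.

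For the gyration I would use the closed form of Eq.~\ref{eq:app_gyr_raw}, or equivalently the stable form of Eq.~\ref{eq:app_gyr_stable}. At $c=0$ one has $a=b=0$ and $d=1$, so $\operatorname{gyr}[z_e,-\hat z]v=v$. Alternatively, commutativity of $\oplus_0$ together with Eq.~\ref{eq:bg_gyration} gives the same conclusion directly. The d-HSTE operator is therefore $\tfrac14 I$.

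The stop-gradients on $r_i$ for $i\ge1$ do not change the gradient seen by the encoder. Proposition~\ref{prop:leak} gives $A_i=0$ at $c=0$, so the Euclidean residual branch of Eq.~\ref{eq:app_firewall} already transmits nothing. The encoder thus receives one copy of the reconstruction gradient scaled by $\tfrac14$, plus the $i=1$ commitment term, exactly as in Euclidean RVQ with the identity STE up to this rescaling. The per-stage codebook losses are unaffected by the routing.

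\emph{Main obstacle.} The only delicate point is the $0/0$ form $\tanh^{-1}(\sqrt c\,t)/\sqrt c$ in the distance and in $\exp_0^c$ and $\log_0^c$, which has to be read as a limit. A first-order Taylor expansion of $\tanh$ and $\tanh^{-1}$ handles it, and the leftover constant factor $2$ in the distance is harmless for the $\operatorname{argmin}$.
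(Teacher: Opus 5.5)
Your proposal is correct and follows essentially the same route as the paper. It reduces $\oplus_0$ to ordinary addition, so HRA becomes the Euclidean cascade and every gyration is the identity. It then reads off $\lambda^0_x=2$ to get $\tfrac14 I$, and invokes Proposition~\ref{prop:leak} ($A_i=0$) to show the stop-gradients change nothing. Your extra check that the geodesic distance tends to $2\|x-y\|$ fills in a point the paper's proof leaves implicit: the nearest-codeword assignment is unchanged, and the distance losses only pick up a factor $4$.
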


\begin{proof}
At $c=0$ Eq.~\ref{eq:bg_mobius_add} gives $x\oplus_0 y=x+y$, so $\oplus_0$ is commutative and associative and $\operatorname{gyr}[u,v]=\mathrm{Id}$ for all $u,v$ by Eq.~\ref{eq:bg_gyration}. The HRA recursion of Eq.~\ref{eq:hrq_residual} therefore becomes $r_i=-q_i+r_{i-1}=r_{i-1}-q_i$ and $\hat z=q_1+\cdots+q_N=\sum_i q_i$, which is the Euclidean cascade; the association order is immaterial, so HRA and the naive convention coincide. The conformal factor is the constant $\lambda^0_x=2$, so $\widetilde P^0_{q\to z_e}=\tfrac{1}{2\cdot2}\mathrm{Id}=\tfrac14 I$. Proposition~\ref{prop:leak} gives $A_i=0$, so block-level routing and the per-stage cascade deliver the same encoder gradient.\qed
\end{proof}

In practice we use the identity estimator directly at $c=0$, so the Euclidean baseline is recovered exactly rather than up to the factor $\tfrac14$. Hyperbolic residual quantization is in this sense a strict generalization of its Euclidean counterpart rather than a separate model.

\end{document}